\newtheorem{theorem}{Theorem}
\newtheorem{corollary}[theorem]{Corollary}
\newdefinition{rmk}{Remark}
\newproof{pf}{Proof}
\newproof{pot}{Proof of Theorem \ref{thm}}
\newcommand*{\Scale}[2][4]{\scalebox{#1}{$#2$}}%
\def\ps@pprintTitle{%
  \let\@oddhead\@empty
  \let\@evenhead\@empty
  \def\@oddfoot{\reset@font\hfil\thepage\hfil}
  \let\@evenfoot\@oddfoot
}
\begin{document}

\begin{frontmatter}



\title{Unveiling the Unseen Potential of Graph Learning through MLPs: Effective Graph Learners Using Propagation-Embracing MLPs}


\author[1]{Yong-Min Shin}

\affiliation[1]{organization={School of Mathematics and Computing (Computational Science and Engineering)},
            addressline={Yonsei Univserity}, 
            city={Seoul},
            postcode={03722}, 
            country={Republic of Korea}}

\author[1,2]{Won-Yong Shin\corref{cor1}}

\affiliation[2]{organization={Graduate School of Artificial Intelligence},
            addressline={Pohang University of Science and Technology (POSTECH)}, 
            city={Pohang},
            postcode={37673}, 
            country={Republic of Korea}}

\cortext[cor1]{Corresponding author.}

\begin{abstract}
Recent studies attempted to utilize multilayer perceptrons (MLPs) to solve semi-supervised node classification on graphs, by training a student MLP by knowledge distillation (KD) from a teacher graph neural network (GNN). While previous studies have focused mostly on training the student MLP by matching the output probability distributions between the teacher and student models during KD, it has not been systematically studied how to inject the structural information in an \textit{explicit} and \textit{interpretable} manner. Inspired by GNNs that separate feature transformation $T$ and propagation $\Pi$, we re-frame the KD process as enabling the student MLP to explicitly learn both $T$ and $\Pi$. Although this can be achieved by applying the inverse propagation $\Pi^{-1}$ before distillation from the teacher GNN, it still comes with a high computational cost from large matrix multiplications during training. To solve this problem, we propose {\bf Propagate \& Distill (P\&D)}, which propagates the output of the teacher GNN before KD and can be interpreted as an approximate process of the inverse propagation $\Pi^{-1}$. Through comprehensive evaluations using real-world benchmark datasets, we demonstrate the effectiveness of {\bf P\&D} by showing further performance boost of the student MLP.
\end{abstract}

\begin{keyword}
Graph neural network \sep knowledge distillation \sep multilayer perceptron \sep propagation \sep semi-supervised node classification.



\end{keyword}

\end{frontmatter}

\newpage
\section{Introduction}
\label{section:introduction}

Graph neural networks (GNNs)~\citep{kipf2017gcn, hamilton2017graphsage, velickovic2018gat, xu2019gin, gilmer2017mpnn, bronstein2021messagepassing} have been widely studied as a powerful means to extract useful low-dimensional features from attributed graphs while performing various downstream graph learning tasks such as node classification, link prediction, and community detection. Although aggregating information from neighboring nodes via message passing is crucial to the GNN's performance, this process is also known to cause an exponential increase of inference time with respect to the number of GNN layers~\citep{yan2020tinygnn, zhang2022glnn}. This puts a constraint on the usage of GNNs in various real-world applications especially where fast inference time is essential, such as web recommendation services~\citep{hao2020pcompanion, zhang2020agl}, real-time simulation~\citep{sanchisalepuz2022realtimesimulation}, or even image-guided neurosurgery~\citep{salehi2021physgnn}. Very recently, GLNN~\citep{zhang2022glnn} proposed to use a multilayer perceptron (MLP) as the main architecture for semi-supervised node classification (SSNC) on graphs, by training the student MLP with knowledge distillation (KD)~\citep{hinton2015kd} from a teacher GNN. This approach drastically reduced the inference time, which is more than $\times$100 faster than that of GNNs, while still maintaining satisfactory performance.

Nevertheless, from a KD point of view on the graph domain, choosing MLPs as a student model makes the training task more challenging when compared to other domains such as computer vision and natural language processing. Typically, the student model has access to the same input data as that of the teacher model in the standard KD framework, while benefiting from new information learned the teacher model thanks to its larger capacity~\citep{gou2021kdsurvey}. However, when the student model is denied access to vital parts of the given dataset, {\it i.e.}, structural information on graphs (see Figure~\ref{figure:overview}), there is an even larger information gap between the teacher and student models. Therefore, the main objective of the so-called GNN-to-MLP KD is to enable the weights of the student MLP to \textit{learn} the graph structure so that, during inference, the student MLP achieves the prediction accuracy on par with its counterpart ({\it i.e.}, the teacher GNN).

Although there have been several follow-up studies on GNN-to-MLP KD due to the success of GLNN, common approaches for achieving state-of-the-art performance are based on leveraging structural information as a part of the input to the student MLP~\citep{tian2023nosmog, chen2021samlp, zheng2022coldbrew}, which however poses the following technical challenges:

\begin{itemize}
\item {\bf (Challenge 1)} In~\citep{tian2023nosmog}, which is one follow-up study of GLNN, the positional embeddings learned by DeepWalk~\cite{perozzi2014deepwalk} are concatenated into the node features as input to the student MLP. However, the transductive nature of such embedding techniques may require re-computation when the underlying graph evolves, which thus considerably increases the computational complexity. This makes the distillation from GNNs to MLPs difficult to apply in practice.

\item {\bf (Challenge 2)} Other GNN-to-MLP KD approaches either directly utilize rows of the adjacency matrix, which makes the input dimension of the MLP dependent on the number of nodes~\citep{chen2021samlp}, or require a specific design of GNNs that is able to provide structural node features~\citep{zheng2022coldbrew}. As such, feeding the structural information into the MLP model comes at the cost of extra computation and/or adjustment of dimensionality.
\end{itemize}

\begin{figure*}[t]
\begin{center}
    \includegraphics[width=.95\textwidth]{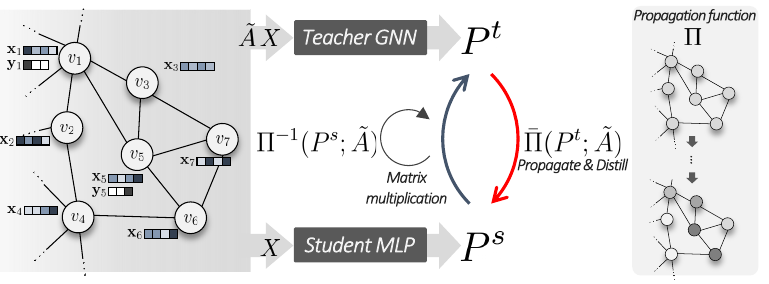}
    \caption{KD from a teacher GNN to a student MLP. In this setting, the figure shows the difference of input information between the teacher GNN and the student MLP, where the student MLP does not have access to the graph structure ({\it i.e.}, the normalized adjacency matrix $\Tilde{A}$) and only uses the node features $X$ as input. In our framework, we propose to enhance the performance of the student model by directly propagating the teacher's output to further inject the graph structure (the blue arrow), which is considerably more computationally efficient than applying an inverse propagation function $\Pi^{-1}$ before distillation (the green arrow) as it requires repeated matrix multiplications during training.}
    \label{figure:overview}
\end{center}
\vskip -0.2in
\end{figure*}

To solve the above limitations and challenges, we devise a new methodology built upon the following two design principles: 1) {\it fixing} the input of the student MLP to the {\it node features only} and 2) injecting structural information during {\it training}. Such a methodology provides a more flexibility in designing GNN-to-MLP KD frameworks. Note that, since most of GNN models stack up to only a few layers ({\it e.g.}, 2 GNN layers)~\citep{shchur2018pitfall, li2019deepgcn} in practice due to the oversmoothing problem~\citep{li2018deeper, chen2020measuring}, depending solely on the teacher GNN's output may disable the student MLP to capture high-order connectivity information. This leaves a room for training the student MLP via KD in the sense of making full use of the graph structure. In this study, we would like to tackle this open challenge in order to unveil the unseen potential of MLPs from the perspective of graph learning.

To achieve this goal, we are basically inspired by GNN approaches that separate feature transformation $T$ and propagation $\Pi$~\citep{gasteiger2019ppnpappnp, huang2021cns, bojchevski2020scalegnnappnp, chien2021generalappnp}, which performs a base prediction followed by a propagation process. Based on these studies, we aim to further boost the performance of the student MLP using $\Pi$ during the KD process in an {\it explicit} and {\it interpretable} manner, which encodes a global structural view of the underlying graph. As illustrated in Figure~\ref{figure:overview}, at its core, this approach begins by regarding the teacher GNN's output as a {\it base prediction} rather than the final prediction. This eventually allows us to arrive at a straightforward formulation where the output of the student MLP first passes through an inverse propagation $\Pi^{-1}$ before being matched with the teacher GNN's output during KD. Although this approach can be interpreted as training the student MLP in such a way that it behaves as a graph learner embracing the propagation $\Pi$ by explicitly learning both $T$ and $\Pi$, it requires large matrix multiplications for each feed-forward process during training in KD. As a more computationally efficient workaround, we propose {\bf Propagate \& Distill (P\&D)}, which approximates $\Pi^{-1}$ by recursively propagating the teacher GNN's output over the graph. Our approach also allows a room for more flexibility by adopting different propagation rules, akin to prior studies on label propagation~\citep{zhou2003lp, zhu2003lp2, huang2021cns}. Through extensive experimental evaluations, we demonstrate the superiority of {\bf P\&D} over benchmark methods on popular real-world benchmark datasets, and show that stronger propagation generally leads to better performance.

In summary, our contributions are as follows:
\begin{itemize}
    \item We present {\bf P\&D}, a simple yet effective GNN-to-MLP distillation method that allows additional structural information to be injected explicitly and interpretably during training by recursively propagating the output of the teacher GNN;
    \item We empirically validate the effectiveness of {\bf P\&D} using real-world graph benchmark datasets for both transductive and inductive settings;
    \item We also provide a case study using a synthetic Chains dataset to interpret the effect of {\bf P\&D} on the distillation performance, while making connections to graph signal denoising (GSD); 
    \item We demonstrate that deeper and stronger propagation generally tends to achieve better performance in {\bf P\&D}, and also provide theoretical findings on the connection between the homophily principle and self-correction;
    \item We compare several variants of the inverse propagation, including a direct convolution, and observe the effect on the student MLP's performance.
\end{itemize}

The remainder of this paper is organized as follows. In Section~\ref{section:relatedwork}, we present prior studies that are related to decoupled GNNs and GNN-to-MLP KD frameworks. In Section~\ref{section:background}, we describe several preliminaries to our work. In Section~\ref{section:proposedmethod}, we explain the methodology of our study, including the background and the proposed {\bf P\&D} framework. In Sections~\ref{section:maineresults} and~\ref{section:analysis}, we present comprehensive experimental results. In Section~\ref{section:theoreticalanalysis}, we also provide a theoretical analysis of our {\bf P\&D} framework. Finally, we provide a summary and concluding remarks in Section~\ref{section:conclusion}.

Table 1 summarizes the notation that is used in this
paper. This notation will be formally defined in the following sections when we introduce preliminaries and our methodology with technical details.

\begin{table}[t]
\caption{Summary of notations.}
\label{table:notation}
\vskip 0.15in
\begin{center}
\begin{small}
\begin{tabular}{ll}
\toprule
\textbf{Notation} & \textbf{Description} \\
\midrule
$G$ & Given graph dataset \\
$\mathcal{V}$ & Set of nodes \\
$\mathcal{V}_{T}$ & Set of nodes with their class labels known during training \\
$\mathcal{U}$ & $\mathcal{V} \setminus \mathcal{V}_T$ \\
$\mathcal{E}$ & Set of edges \\
$\mathcal{Y}$ & Set of class labels \\
$X$ & Node feature matrix \\
$A$ & Adjacency matrix of G \\
$D$ & Degree matrix of $A$ \\
$\Tilde{A}$ & Symmetrically normalized adjacency matrix \\
\bottomrule
\end{tabular}
\end{small}
\end{center}
\vskip -0.1in
\end{table}

\section{Related Work}
\label{section:relatedwork}
In this section, we review previous studies that are most related to our work, including 1) GNN architectures having separate propagation and transformation phases and 2) GNN-to-MLP KD frameworks.

\subsection{GNNs with Decoupled Propagation and Transformation Phases}
\label{subsection:APPNPstyleGNNs}

As a category of neural network architectures that are tailored to process graph data, GNNs are typically characterized by the message passing mechanism~\citep{gilmer2017mpnn}, which defines each GNN layer as a combination of the following two phases: the propagation phase, which utilizes the underlying graph structure to mix the information from each node, and the transformation phase, which transforms the mixed information via a small MLP. By stacking these layers, propagation and transformation phases are alternately performed multiple times throughout the feed-forward process. Aside from these designs, there were a handful of studies that took a different approach by separating propagation and transformation phases throughout the GNN model. As one of the most widely-known such architectures, APPNP~\citep{gasteiger2019ppnpappnp} proposed a design that first transforms the input node features via a simple MLP, followed by several propagation steps akin from PageRank~\citep{Page1999PageRank}; this allows to encode the information from a number of neighboring nodes while avoiding the oversmoothing problem, a phenomenon in message passing neural networks when staking multiple layers. In PPRGo~\citep{bojchevski2020scalegnnappnp}, the authors improved the scalability of APPNP by using a sparse version of the PageRank matrix while enabling parallel and distributed training. GPR-GNN~\citep{chien2021generalappnp} extended APPNP and employed intermediate stages of the PageRank propagation by using their weighted sum with learnable coefficients as the final output rather than just the last propagation step, which enables the model to adapt to graphs with different characteristics (\textit{e.g.}, heterophily). Finally, in Correct and Smooth~\citep{huang2021cns}, the authors proposed to consider residuals to correct the error during propagation, while showing solid performance enhancements across different real-world benchmark datasets. Although our work draws inspiration from these classes of GNNs, we do not directly include propagation as our final model. Rather, our objective is to enable the student MLP model to explicitly learn such propagations so that the propagation functions are a core component only during distillation, but not during inference.

\subsection{GNN-to-MLP KD}
\label{subsection:GNNtoMLPKD}

When GNN-based models are deployed in various real-world applications with strong runtime constraints, one of the practical challenges is the slow inference time caused by the exponential increase in neighborhood fetching required for propagation~\citep{yan2020tinygnn},~\citep{zhang2022glnn}. Recently, GLNN~\citep{zhang2022glnn} proposed a simple but effective solution to this problem, where it combined the fast inference time of MLP models with the empirical capabilities of GNNs by performing KD~\citep{hinton2015kd} during training. The authors demonstrated that distilling the knowledge acquired from the teacher GNN to a student MLP effectively can make MLP models an effective graph learner, despite MLPs not having any explicit access to graph structural information. The core component of effective distillation in this GNN-to-MLP KD framework is how to transfer this structural information to the MLP model. NOSMOG~\citep{tian2023nosmog} employed multiple techniques to address this, which include 1) utilizing graph embedding vectors as input, 2) distilling intermediate representations of the teacher GNN, and 3) adversarial training. SA-MLP~\citep{chen2021samlp} even directly used the adjacency matrix as input. Alternatively, the idea of designing a GNN model that can provide embedding vectors encoded with structural information to be used by the MLP model was presented in Cold Brew~\citep{zheng2022coldbrew}. Recently, it was shown in~\citep{Guo2023LLP} that MLPs can be extended from node classification to link prediction by focusing on relational information between node pairs during KD. As highlighted before, our study is also interested in achieving the goal of making the MLP explicitly learn the structural information. Note that we take a different approach from the aforementioned studies in the sense of avoiding adding additional input to the MLP, which inevitably requires further preprocessing and may hinder computational efficiency during inference.

\section{Preliminaries}
\label{section:background}
In this section, we summarize several preliminaries to our work, along with basic notations.

\subsection{SSNC} 
\label{subsection:SSNC}
In SSNC, we are given a graph dataset $G = (\mathcal{V}, \mathcal{E}, X)$, where $\mathcal{V}$ is the set of nodes, $\mathcal{E} \subseteq \mathcal{V} \times \mathcal{V}$ is the set of edges, and $X \in \mathbb{R}^{|\mathcal{V}| \times d}$ is the node feature matrix where the $i$-th row ${\bf x}_i = X[i,:] \in \mathbb{R}^d$ is the $d$-dimensional feature vector of node $v_i \in \mathcal{V}$. We also denote the adjacency matrix to represent $\mathcal{E}$ as $A \in \mathbb{R}^{|\mathcal{V}| \times |\mathcal{V}|}$, where $A[i,j] = 1$ if $(i,j)\in\mathcal{E}$, and 0 elsewhere. The degree matrix $D = \text{diag}(A{\bf 1}_{|\mathcal{V}|})$ is a diagonal matrix whose diagonal entry $D[i, i]$ represents the number of neighbors for $v_i$, where ${\bf 1}_{|\mathcal{V}|}$ is the all-ones vector of dimension $|\mathcal{V}|$. Alongside $G$, $\mathcal{Y}$ indicates the set of class labels, and each node $v_i$ is associated with a ground truth label $y_i \in \mathcal{Y}$. Typically, $y_i$ is encoded as a one-hot vector ${\bf y}_i \in \mathbb{R}^{|\mathcal{Y}|}$. In SSNC, we assume that only a small subset of nodes $\mathcal{V}_T \subset \mathcal{V}$ have their class labels known during training. The objective of SSNC is to predict the class label for the rest of the nodes in $\mathcal{U} = \mathcal{V} \setminus \mathcal{V}_T$. 

\subsection{KD from GNNs to MLPs} 
Recently, several studies have put their efforts to leverage MLP models as the main architecture for SSNC~\citep{zhang2022glnn, tian2023nosmog, hu2021graphmlp, zheng2022coldbrew, chen2021samlp, Dong2022mlpgraphlearner, wu2023pgkd, Wu2023mlpgraphlearner}. Most of these attempts adopt the KD framework~\citep{bucila2006modelcompression, hinton2015kd, gou2021kdsurvey} by transferring knowledge from a teacher GNN to a student MLP. As the core component, knowledge transfer is carried out by matching soft labels via a loss function $\mathcal{L}_{\text{KL}}$, which plays a role of matching the output probability distributions between the teacher and student models with respect to the Kullback–Leibler (KL) divergence. Although other distillation designs have been proposed since~\citep{hinton2015kd}, distillation via $\mathcal{L}_{\text{KL}}$ has been a popular choice and is adopted in lots of follow-up studies that aim to distill knowledge from GNNs to MLPs.

\begin{figure*}[t]
    \centering
    \subfigure[Overview of the previous GNN-to-MLP KD framework (GLNN).]{
    \centering
    \includegraphics[width=0.75\textwidth]{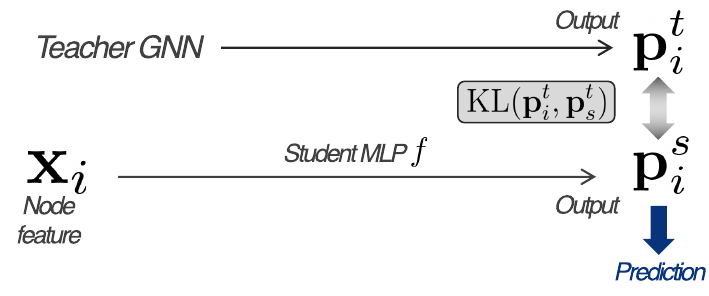}
    \label{figure:main-a}
    }
    \centering
    \subfigure[Our proposed training framework.]{
    \centering
    \includegraphics[width=0.75\textwidth]{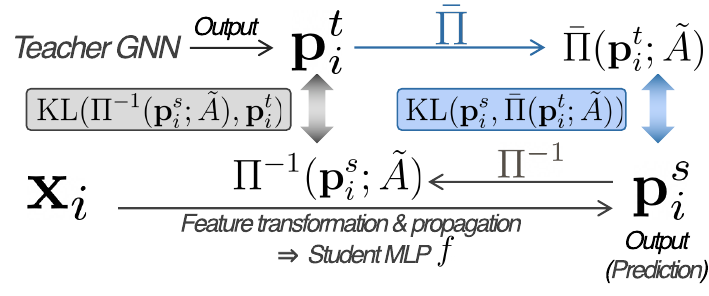}
    \label{figure:main-b}
    }
    \caption{Conceptual comparison between GLNN~\citep{zhang2022glnn} and our proposed \textbf{P\&D}. In Figure~\ref{figure:main-a}, the output probability vector ${\bf p}^t_i$ provided by the teacher GNN is directly used in the KL divergence loss while training the student MLP $f$. On the other hand, Figure~\ref{figure:main-b} illustrates InvKD (gray arrow) and {\bf P\&D} (blue arrow), where the teacher GNN's output ${\bf p}^t_i$ is further propagated before distillation to inject further structural information during training.}
    \label{figure:main}
\end{figure*}

More precisely, in our distillation setting where an MLP is trained via distillation from a teacher GNN (denoted as $g$), we first assume that the output of the teacher GNN, $g(A, X) = H^t \in \mathbb{R}^{|\mathcal{V}| \times |\mathcal{Y}|}$, is given, where ${\bf h}^t_i = H^t[i,:]$ represents the output logit for node $v_i$. The objective of KD from GNNs to MLPs is to train the student MLP model $f$, which returns an output logit $f({\bf x}_i) = {\bf h}^s_i$ for a given node feature vector ${\bf x}_i$ of node $v_i$ as input. The two output logits ${\bf h}_i^t$ and ${\bf h}^s_i$ are transformed into class probability distributions by the softmax function, \textit{i.e.}, ${\bf p}^t_i = \texttt{softmax}({\bf h}^t_i)$ and ${\bf p}^s_i = \texttt{softmax}({\bf h}^s_i)$, respectively. As depicted in Figure~\ref{figure:main-a}, during distillation, $\mathcal{L}_{\text{KL}} \triangleq \text{KL}({\bf p}^s_i, {\bf p}^t_i)$ compares the student's output probability ${\bf p}_i^s$ and the teacher's output probability ${\bf p}_i^t$ by a KL divergence loss. A mix of $\text{KL}({\bf p}^s_i, {\bf p}^t_i)$ and the cross-entropy loss, denoted as $\text{CrossEntropy}({\bf p}^s_i, {\bf y}_i)$, with labeled nodes is used as our final loss function:
\begin{equation}
\label{eq:distillloss}
    \mathcal{L}_{\text{Distill}} = \alpha \sum_{i \in \mathcal{V}_T} \text{CrossEntropy}({\bf p}^s_i, {\bf y}_i) + (1 - \alpha) \sum_{i \in \mathcal{V}} \text{KL}({\bf p}^s_i, {\bf p}^t_i),
\end{equation}
where $\alpha \in [0,1]$ is a mixing parameter. After training, only the MLP model $f$ is used during inference, which dramatically improve the computational efficiency since the feed-forward process basically involves only matrix multiplication and element-wise operations, without message passing~\citep{zhang2022glnn, tian2023nosmog}. Since a majority of GNN-to-MLP KD methods adopt only the second term as their distillation loss~\citep{zhang2022glnn, tian2023nosmog, chen2021samlp}, we also focus on setting $\alpha=0$ in Eq.~(\ref{eq:distillloss}) in our study.

\section{Methodology}
\label{section:proposedmethod}

In this section, we elaborate on our proposed GNN-to-MLP KD framework, named as {\bf P\&D}. We first describe the background of prior work that attempted to separate feature transformation and propagation, and then we describe our {\bf P\&D} framework.

\subsection{Background} Typical GNN models stack multiple message passing layers, each of which consists of the propagation phase and the transformation phase~\citep{gilmer2017mpnn}. On the other hand, a handful of prior studies including~\citep{gasteiger2019ppnpappnp, huang2021cns, bojchevski2020scalegnnappnp, chien2021generalappnp} proposed to separate feature transformation and propagation in the GNN model. Given a feature vector ${\bf x}_i \in \mathbb{R}^{d}$, a feature transformation $T$ is first applied to calculate the base prediction, and then the GNN model further propagates the base prediction along the underlying graph by a propagation function $\Pi$ to get the final prediction. As an example, PPNP~\citep{gasteiger2019ppnpappnp} employed an MLP model to learn the proper feature transformation $T$ and utilized personalized PageRank (PPR) as the propagation function $\Pi$. For some base prediction matrix $H\in\mathbb{R}^{|\mathcal{V}| \times |\mathcal{Y}|}$, the propagation function for $\Pi = \Pi_{\text{PPR}}$ in PPNP is characterized as the matrix multiplication:
\begin{equation}
\label{eq:originalppr}
\Pi_{\text{PPR}}(H; \tilde{A}) = (1 - \gamma)(I_{|\mathcal{V}|} -  \gamma\tilde{A})^{-1}H,
\end{equation}
where $1 - \gamma \in (0, 1]$ is the restart probability, $\mathcal{V}$ is the set of nodes, $I_{|\mathcal{V}|} \in \mathbb{R}^{|\mathcal{V}| \times |\mathcal{V}|}$ is an identity matrix, and $\tilde{A} \in \mathbb{R}^{|\mathcal{V}| \times |\mathcal{V}|}$ is the symmetrically normalized adjacency matrix. Such separation-based approaches have been shown to effectively encode (global) structural information by increasing the number of iterations of propagation while avoiding the oversmoothing effect, a performance deterioration phenomenon known in typical message passing neural network models with deeper layers~\citep{gasteiger2019ppnpappnp, chien2021generalappnp}. 

In our study, we aim to further boost the performance of the student MLP $f$ as a means of making $f$ learn the graph structural information more \textit{explicitly}. More precisely, we are interested in training the student MLP in such a way that it becomes a graph learner embracing the propagation function $\Pi$ by learning both $T$ and $\Pi$. To this end, we re-frame the GNN-to-MLP KD problem by regarding the soft labels provided from a teacher GNN ({\it i.e.}, $P^t$) as a \textit{base prediction} rather than the final prediction for distillation. In this new attempt, we do not want to rely only on the output logits and/or the hidden representations of the teacher model although we are given $P^t$ to aid the training process. Assuming that the feature transformation $T$ on the node features is learned by the student MLP during KD without difficulty, we focus on learning the appropriate $\Pi$ and explicitly introduce an additional propagation function that is applied to $P^t$. Here, we utilize $\Pi(P^t; \tilde{A})$ to take the teacher GNN's output $P^t$ as input and further propagate $P^t$ according to the adjacency matrix $\tilde{A}$. Although the straightforward way to achieve this in the context of KD is to set the loss function as $\mathcal{L}_{\text{KL}} = \text{KL}(P^s, \Pi(P^t; \tilde{A}))$, due to the fact that calculating $\Pi=\Pi_\text{PPR}$ in Eq.~(\ref{eq:originalppr}) without the expensive inverse matrix operation is desired, it is more practical to consider a composite function of both the inverse propagation function $\Pi^{-1}_{\text{PPR}}$ and the student MLP $f$ as our distillation loss. In other words, the propagation is performed on the student model's output $P^s$, \textit{i.e.,} $\Pi^{-1}(P^s; \tilde{A})$, instead of propagating the teacher model's output $P^t$ alongside $\Pi$. Then, we formulate a new training objective in the sense of minimizing the loss $\mathcal{L}_\text{InvKD}$:
\begin{equation}
    \mathcal{L}_\text{InvKD} = \text{KL}(\Pi^{-1}(P^s; \tilde{A}), P^t) = \text{KL}((2 I_{|\mathcal{V}|} - \gamma \tilde{A})P^s, P^t),
    \label{eq:trainingobjective}
\end{equation}
where $\text{KL}$ denotes the KL divergence, $P^s$ is the student MLP's output, and $\gamma$ comes from $\Pi_{\text{PPR}}$.\footnote{The constant term $(1-\gamma)^{-1}$ in $\Pi_{\text{PPR}}^{-1}$ can be ignored as we normalize both terms in the KL divergence loss.} We name the GNN-to-MLP KD framework using Eq.~(\ref{eq:trainingobjective}) as InvKD. In Eq.~(\ref{eq:trainingobjective}), we add an additional identity matrix to $I_{|\mathcal{V}|} - \gamma \tilde{A}$, which can be interpreted as an additional skip-connection alongside $\Pi^{-1}$.\footnote{Our empirical finding showed the substantial performance gain over the case without the additional identity matrix in Eq. (\ref{eq:trainingobjective}).} By multiplying the term $2 I_{|\mathcal{V}|} - \gamma \tilde{A}$ by $P^s$ before calculating the loss, we are capable of making this formulation explicitly involve the structural information during training. The schematic overview of InvKD is illustrated in Figure~\ref{figure:main-b}, where the output prediction of the student MLP $\mathbf{p}_i^s$ is further transformed into $\Pi^{-1}(\mathbf{p}_i^s; \Tilde{A})$, eventually compared with the teacher GNN's output $\mathbf{p}_i^t$ by the KL divergence (see the grey arrow).

\subsection{P\&D} 
In PPNP~\citep{gasteiger2019ppnpappnp}, the authors proposed to approximate the inverse matrix calculation in $\Pi_{\text{PPR}}$ along with a recursive formula. Similarly, instead of using Eq.~(\ref{eq:trainingobjective}) as the distillation loss, we present an alternative of InvKD, named as {\bf P\&D}, due to the computational efficiency and the design flexibility. More specifically, instead of applying the inverse propagation function $\Pi^{-1}$ to the output of the student MLP, we aim to discover an approximate propagation function $\Bar{\Pi} \approx \Pi$, where $\Bar{\Pi}$ is defined as a recursive formula that is applied to the output of the \textit{teacher GNN's prediction}. In other words, $\Bar{\Pi}$ propagates $P^t$ along the underlying graph by recursively applying
\begin{equation}
\label{equation:labelpropagation}
    P^{t}_{l+1} = \gamma \tilde{A} P^t_{l} + (1 - \gamma) P^t_{l},
\end{equation}
where we initially set $P^t_1 = P^t$ for $l = 1, \cdots, T$; and $\gamma \in (0,1]$ is a coefficient controlling the propagation strength through neighbors of each node. Denoting the propagation function in {\bf P\&D} as $\Bar{\Pi}(P^t; \tilde{A})$, we now formulate our new loss function $\mathcal{L}_{\textbf{P\&D}}$ as follows:
\begin{equation}
\label{equation:pndloss}
    \mathcal{L}_{\textbf{P\&D}} = \text{KL}(P^s, \Bar{\Pi}(P^t; \tilde{A})).
\end{equation}
In Figure~\ref{figure:main-b}, the teacher GNN's output $\mathbf{p}_i^t$ is further propagated by $\bar{\Pi}$ before being distilled into the student model (see the blue arrow in Figure~\ref{figure:main-b}). This approach not only introduces another natural \textit{interpretation}, but also allows a room for \textit{flexibility} in the design of a recursive formula. Precisely, Eq.~(\ref{equation:labelpropagation}) can be seen as iteratively smoothing the output of the teacher's prediction along the graph structure, which is closely related to classic label propagation methods~\citep{zhou2003lp, zhu2003lp2} that propagate node label information rather than probability vectors. As in the label propagation, we can say that {\bf P\&D} also takes advantage of the homophily assumption to potentially correct the predictions of incorrectly-predicted nodes with the aid of their (mostly correctly predicted) neighbors. We will theoretically analyze how the homophily assumption plays an important role in Section~\ref{section:theoreticalanalysis}. Furthermore, thanks to the flexibility of the family of label propagation, we introduce another variant, named as \textbf{P\&D}-fix. In this version, the $l$-th iteration of propagation now becomes
\begin{align}
    &\text{(Step 1) } P^{t}_{l+1} =  \gamma \tilde{A}P^t_{l} +(1 - \gamma) P^t_{l},\nonumber\\
    &\text{(Step 2) } P^{t}_{l+1}[j,:] = P^t[j, :] \text{ for } j \in \mathcal{V}_T, \label{equation:fixprop}
\end{align}
where $P^t=P_1^t$ and $\mathcal{V}_T \in \mathcal{V}$ is a subset of nodes having their class labels. The difference between {\bf P\&D} and {\bf P\&D}-fix is that, for every iteration, the output probability of training nodes gets manually replaced by the initial output probability $P^t$ (see Step 2 in Eq.~(\ref{equation:fixprop})). Adding Step 2 during propagation will lead to the initial output probability for some nodes in the training set as their predictions are expected to be nearly correct. In later descriptions, we denote {\bf P\&D} and {\bf P\&D}-fix as the versions using functions $\Bar{\Pi}$ and $\Bar{\Pi}_{\text{fix}}$, respectively. 


Along with \textbf{P\&D} and its variant {\bf P\&D}-fix, we present the following two claims, which generally hold when we evaluate the performance of the student MLP model:

\begin{itemize}
    \item ({\bf C1}) Deeper propagation ({\it i.e.}, higher values of $T$) tends to be beneficial in improving the performance of the student MLP;
    \item ({\bf C2}) Stronger propagation ({\it i.e.}, higher values of $\gamma$) tends to be beneficial in improving the performance of the student MLP.
\end{itemize}

Note that these claims are important in that they provide important guidance on how to conduct propagation before distillation. We shall empirically validate these claims in Section~\ref{subsection:hyperparameterstudy}. 

\section{Main Results}
\label{section:maineresults}

In this section, we present comprehensive experimental results to validate the effectiveness of three GNN-to-MLP KD frameworks, including InvKD, {\bf P\&D}, and {\bf P\&D}-fix.

\subsection{Experimental Setup} 
In our experiments, we mostly follow the settings of prior studies~\citep{zhang2022glnn, yang2021cpf}. Specifically, we set $\alpha = 0$ in Eq.~(\ref{eq:distillloss}) for all experiments, focusing only on the KL divergence loss. We adopt a 2-layer GraphSAGE model~\citep{hamilton2017graphsage} with 128 hidden dimensions. We use the Adam optimizer~\citep{kingma2014adam}, batch size of 512, and early stopping with patience 50 in training. We report the average accuracy of the student MLP model over 10 different trials.

\subsection{Datasets} 
We use the Cora, CiteSeer, Pubmed~\citep{sen2008ccp, yang2016ccp}, A-Computer, and A-Photo~\cite{shchur2018pitfall} datasets. We choose 20 / 30 nodes per class for the training / validation sets as in~\citep{shchur2018pitfall, zhang2022glnn}. For the inductive setting, we further sample 20\% of test nodes to be held out during training. For a larger-scale experimental setting, we also adopt the Arxiv~\citep{hu2020ogb} dataset from the OGB benchmark and use the standard splits. The statistics of the six real-world datasets used in the experiments are summarized in Table~\ref{table:datasetstats}.

\begin{table}[t]
\caption{Statistics of six real-world datasets. NN, NE, NF, and NC denote the number of nodes, the number of edges, the number of node features, the number of classes, respectively.}
\label{table:datasetstats}
\vskip 0.15in
\begin{center}
\begin{small}
\begin{tabular}{lcccc}
\toprule
Dataset & NN & NE  & NF & NC \\
\midrule
Cora & 2,485 & 5,069 & 1,433 & 7\\
CiteSeer & 2,120 & 3,679 & 3,703 & 6\\
Pubmed & 19,717 & 44,324 & 500 & 3\\
A-computer & 13,381 & 245,778 & 767 & 10\\
A-photo & 7,487 & 119,043 & 745 & 8\\
Arxiv & 169,343 &  1,166,243 & 128 & 40\\
\bottomrule
\end{tabular}
\end{small}
\end{center}
\vskip -0.1in
\end{table}

\subsection{Scenario Settings} 
\label{subsection:scenariosettings}
In our study, we consider transductive and production scenarios. In the transductive scenario, we assume that access to information besides the labels (\textit{i.e.}, node features and associated edges) for all nodes in $\mathcal{U}$ is available during training. In other words, we use the set of edges $\mathcal{E}$ and node features $X$ for all nodes in $\mathcal{V}$, along with the label information in $\mathcal{V}_T\subset \mathcal{V}$ to be used during training.

In the production scenario, we sample a held-out subset of nodes in $\mathcal{U}$ by separating into two disjoint subsets, namely the observed subset $\mathcal{U}_{\text{obs}}$ and the inductive subset $\mathcal{U}_{\text{ind}}$ \textit{i.e.,} $\mathcal{U}_{\text{obs}} \cup \mathcal{U}_{\text{ind}} = \mathcal{U}$ and $\mathcal{U}_{\text{obs}} \cap \mathcal{U}_{\text{ind}} = \varnothing$. All edges that connect nodes between $\mathcal{U}_{\text{obs}}$ and $\mathcal{U}_{\text{ind}}$ is removed, and remain disconnected during the test phase, following~\citep{zhang2022glnn}. The unseen nodes in $\mathcal{U}_{\text{ind}}$ are completely unknown during training. The experiments for the production scenario are carried out in Section~\ref{subsection:productionexperiment}.


\subsection{Experimental Results and Analyses} 
Table~\ref{table:mainexperiment} summarizes the performance comparison of InvKD, {\bf P\&D}, and {\bf P\&D}-fix with three benchmark methods, including GLNN~\citep{zhang2022glnn}, the teacher GNN model, and the plain MLP model without distillation, in terms of the node classification accuracy for five real-world graph benchmark datasets, including Cora, CiteSeer, Pumbed, A-Computer, and A-Photo, in the transductive setting. Although there are several follow-up studies on GNN-to-MLP KD, only GLNN adopts the fair setting as ours. Note that a more recent benchmark method such as NOSMOG~\citep{tian2023nosmog} was not shown since it utilizes graph embedding techniques to provide additional input to the student MLP, which is rather unfair in comparison with our approach. In the experimental result, we observe that using one of InvKD, \textbf{P\&D}, and \textbf{P\&D}-fix consistently outperforms all the benchmark methods regardless of datasets. For example, when the CiteSeer dataset is used, {\bf P\&D}-fix exhibits the best performance with the gain of 3.74\% over GLNN.

\begin{table*}[t]
\caption{Node classification accuracy (\%) for five different datasets in the transductive setting. The columns represent the performance of the teacher GNN model, plain MLP model without distillation, GLNN~\citep{zhang2022glnn}, InvKD, and two versions of {\bf P\&D}. For each dataset, the performance of the best method is denoted in bold font.}
\vskip 0.1in
\label{table:mainexperiment}
\centering
\resizebox{1\textwidth}{!}{%
\begin{tabular}{lcccccc}
\toprule
\textit{Transductive} & Teacher GNN & Plain MLP & GLNN & InvKD & {\bf P\&D} & {\bf P\&D}-fix  \\
\midrule
Cora & 78.81 ± 2.00 & 59.18 ± 1.60 & 80.73 ± 3.42 & 82.22 ± 1.45 & 82.16 ± 1.98 & \textbf{82.29} ± 1.60 \\
CiteSeer & 70.62 ± 2.24 & 58.51 ± 1.88 & 71.19 ± 1.36 &  74.08 ± 1.82 & 73.38 ± 1.39 & \textbf{74.93} ± 1.63 \\
Pubmed & 75.49 ± 2.25 & 68.39 ± 3.09 & 76.39 ± 2.36 & 77.22 ± 1.98 & 77.88 ± 2.89 & \textbf{78.11} ± 2.89 \\
A-Computer & 82.69 ± 1.26 & 67.79 ± 2.16 & 83.61 ± 1.49 & \textbf{83.81} ± 1.16 & 82.06 ± 1.58 & 83.21 ± 1.21 \\
A-Photo & 90.99 ± 1.34 & 77.29 ± 1.79 & 92.72 ± 1.11 &  92.83 ± 1.22 & 92.91 ± 1.31 & \textbf{93.02} ± 1.32 \\ 
\bottomrule
\end{tabular}
}
\end{table*}

\begin{table*}[!t]
\caption{Node classification accuracy (\%) for the Arxiv dataset in the transductive setting. The columns represent the performance of the teacher GNN model, plain MLP model without distillation, GLNN~\citep{zhang2022glnn}, and two versions of {\bf P\&D}. The performance of the best method is denoted in bold font.}
\vskip 0.1in
\label{table:mainexperimentarxiv}
\centering
\resizebox{1\textwidth}{!}{%
\begin{tabular}{lccccc}
\toprule
\textit{Transductive} & Teacher GNN & Plain MLP & GLNN  & {\bf P\&D} & {\bf P\&D}-fix  \\
\midrule
Arxiv & 70.64 ± 0.41 & 55.33 ± 1.54 & 63.02 ± 0.41 & \textbf{65.20} ± 0.45 & 65.14 ± 0.35 \\
\bottomrule
\end{tabular}
}
\end{table*}

Additionally, Table~\ref{table:mainexperimentarxiv} also summarizes the performance comparison of \textbf{P\&D} and \textbf{P\&D}-fix with three benchmark methods, including GLNN, the teacher GNN, and the plain MLP, in terms of the node classification accuracy on the larger Arxiv dataset with more than 1 million edges in the transductive setting.\footnote{Since InvKD requires large matrix multiplications during training, it is computationally expensive and memory-inefficient. Thus, we have not run InvKD for the arXiv dataset.} Table~\ref{table:mainexperimentarxiv} shows that \textbf{P\&D} outperforms all the benchmark methods by revealing the gain of 2.18\% over GLNN.

\begin{table*}[!t]
\caption{Node classification accuracy (\%) for the Cora, CiteSeer, Pubmed, A-Computer, and A-Photo dataset in the production setting. The columns represent the performance of the teacher GNN model, plain MLP model without distillation, GLNN~\citep{zhang2022glnn}, InvKD, and two versions of {\bf P\&D}. The performance of the best method is denoted in bold font.}
\vskip 0.1in
\label{table:mainexperimentnosmog}
\centering
\resizebox{1\textwidth}{!}{%
\begin{tabular}{lllllllll}
\toprule
Datasets & Eval & Teacher GNN & Plain MLP & GLNN  & InvKD & {\bf P\&D} & {\bf P\&D}-fix  \\
\midrule
\multirow{3}{*}{Cora} & \textit{prod} & 79.17 & 59.18 & 77.81 & 79.35 & \textbf{79.40} & 78.89 \\ 
& \textit{ind} & 80.61 ± 1.81 & 59.44 ± 3.36 & 72.55 ± 2.58 & \textbf{75.18} ± 1.26 & 72.27 ± 2.74 & 71.24 ± 3.45 \\
& \textit{tran} & 78.81 ± 0.20 & 59.12 ± 1.49 & 79.13 ± 2.01 & 80.40 ± 2.16 & \textbf{81.18} ± 2.04 & 80.81 ± 2.17 \\ \midrule
\multirow{3}{*}{CiteSeer} & \textit{prod} & 68.60 & 58.51 & 68.83 & 72.81 & \textbf{73.76} & 73.20 \\ 
& \textit{ind} & 69.83 ± 4.16 & 59.34 ± 4.61 & 68.37 ± 4.22 & 71.93 ± 3.16 & \textbf{72.87} ± 2.57 & 72.79 ± 2.91 \\
& \textit{tran} & 68.29 ± 3.20 & 58.30 ± 1.95 & 68.94 ± 3.47 & 73.03 ± 2.98 & \textbf{73.98} ± 2.52 & 73.30 ± 2.53 \\ \midrule
\multirow{3}{*}{Pubmed} & \textit{prod} & 74.99 & 68.39 & 75.15 & 76.26 & 76.95 & \textbf{77.15} \\ 
& \textit{ind} & 75.25 ± 2.42 & 68.29 ± 3.26 & 75.01 ± 2.20 & 75.43 ± 2.36 & 76.49 ± 2.47 & \textbf{76.58} ± 2.34 \\
& \textit{tran} & 74.92 ± 1.91 & 68.42 ± 3.06 & 75.18 ± 2.08 & 76.47 ± 2.18 & 77.07 ± 2.36 & \textbf{77.29} ± 2.26 \\
\midrule
\multirow{3}{*}{A-Computer} & \textit{prod} & 83.04 & 67.79 & 82.56 & 82.54 & \textbf{83.26} & 83.23 \\ 
& \textit{ind} & 83.06 ± 1.81 & 67.86 ± 2.16 & 79.77 ± 1.72 & 80.04 ± 1.90 & 80.28 ± 1.79 & \textbf{80.38} ± 1.59 \\
& \textit{tran} & 83.04 ± 1.60 & 67.77 ± 2.18 & 83.26 ± 1.34 & 83.17 ± 1.84 & \textbf{84.01} ± 1.77 & 83.94 ± 1.56 \\
\midrule
\multirow{3}{*}{A-Photo} & \textit{prod} & 90.93 & 77.29 & 91.56 & \textbf{92.38} & 91.81 & 92.01 \\ 
& \textit{ind} & 91.21 ± 1.10 & 77.44 ± 1.50 & 89.73 ± 1.18 & \textbf{90.28} ± 1.04 & 90.23 ± 1.02 & 89.87 ± 1.03 \\
& \textit{tran} & 90.86 ± 0.71 & 77.25 ± 1.90 & 92.07 ± 0.78 & \textbf{92.91} ± 0.66 & 92.20 ± 0.68 & 92.54 ± 0.08 \\
\bottomrule
\end{tabular}
}
\end{table*}

\subsection{Experimental Results under Production Scenarios}
\label{subsection:productionexperiment}
We run further experiments in order to observe the performance of our proposed framework in the production scenario, which provides a comprehensive view of the performance in both transductive and inductive settings. We measure three performance scores, \textit{i.e.}, transductive, inductive, and production scores. The transductive and inductive scores are the accuracy measured on the observed nodes $\mathcal{U}_{\text{obs}}$ and the unseen nodes $\mathcal{U}_{\text{ind}}$, respectively. To evaluate the performance under the production scenario, we interpolate the performance on the observed nodes and the unseen nodes by 8:2.

Table~\ref{table:mainexperimentnosmog} summarizes the performance comparison of InvKD, \textbf{P\&D}, and \textbf{P\&D}-fix with three benchmark methods, including GLNN~\cite{zhang2022glnn}, the teacher GNN, and the plain MLP, in terms of the node classification accuracy for five benchmark datasets, including Cora, CiteSeer, Pubmed, A-Computer, and A-Photo, in the production setting. It is obvious to see that either InvKD, \textbf{P\&D}, or \textbf{P\&D}-fix is the best performer, consistently outperforming GLNN. In particular, we observe that our framework still brings performance benefits in the inductive scenario; for example, InvKD exhibits $2.63\%$ higher performance than that of GLNN in the inductive setting for the Cora dataset. This implies that the additional structural information also facilitates precise prediction even for unseen nodes.

\section{Further Experimental Analyses}
\label{section:analysis}

In this section, we carry out comprehensive experiments to clearly visualize the effects of inverse/and approximate recursive propagation functions ({\it i.e.}, $\Pi^{-1}$ and $\bar{\Pi}$, respectively) for interpretation. We also make connections with GSD, while empirically analyzing how different propagation settings affect the distillation performance in \textbf{P\&D}. Furthermore, we analyze the effect of $\Pi^{-1}$ in InvKD in comparison with alternative operations. Finally, we investigate performance gains when we employ the teacher GNN model with decoupled propagation and transformations.

\begin{figure}[t]
\centering
  \includegraphics[width=0.9\columnwidth]{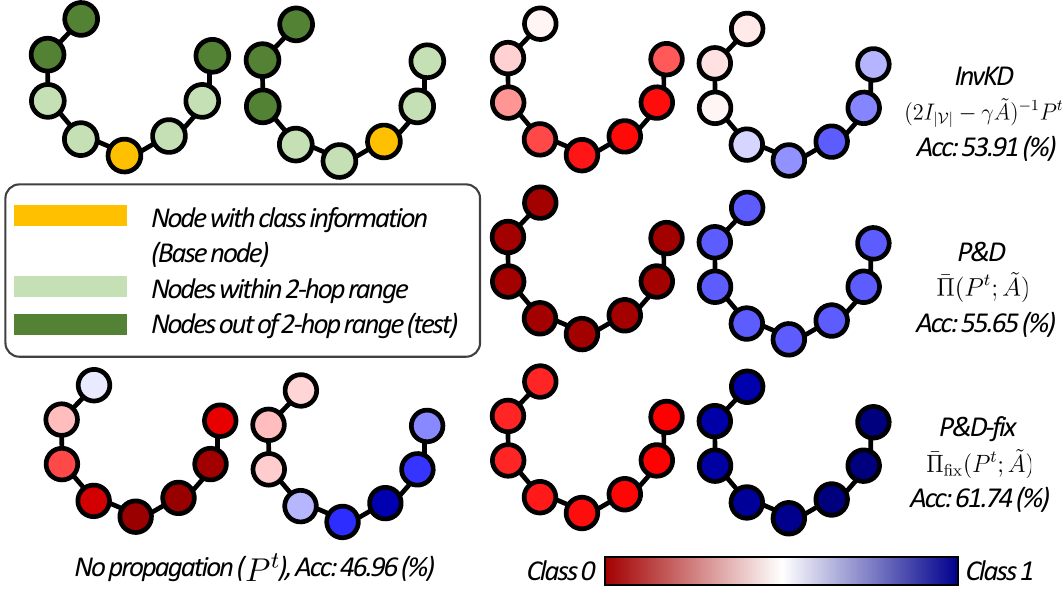}
  \caption{Visualization of the effect of various propagation functions in InvKD, {\bf P\&D}, and {\bf P\&D}-fix on the synthetic Chains dataset.}\label{fig:chainsexperiment}
\end{figure}

\subsection{Case Study on Interpretations}
\label{subsection:casestudy}
In order to provide interpretations on the benefits of injecting additional structural information in InvKD and \textbf{P\&D} during distillation, we perform experiments on the synthetic Chains dataset~\cite{gu2020ignn}, which consists of 30 chain graphs of a fixed length of 8. Fig.~\ref{fig:chainsexperiment} visualizes 2 chains for each propagation for ease of presentation. All nodes in the same chain are assigned to the same class, and the class information is provided as a one-hot representation in the feature vector for \textit{one} of the nodes (the base node) in the chain. To train the teacher GNN, we adopt a 2-layer GraphSAGE model, which is thus able to exploit connectivities only within 2-hop neighbors of the base node. Then, we plot $P^t$, $(2I_{|\mathcal{V}|} -  \gamma\tilde{A})^{-1} P^t$, $\Bar{\Pi}(P^t; \tilde{A})$, and $\Bar{\Pi}_{\text{fix}}(P^t; \tilde{A})$, which correspond to the class probability distributions for GLNN, InvKD, \textbf{P\&D}, \textbf{P\&D}-fix, respectively.\footnote{Although we do not directly use $(2I_{|\mathcal{V}|} -  \gamma\tilde{A})^{-1} P^t$ in the loss for InvKD, we can regard this as the final prediction when the student MLP ideally achieves the zero loss during training.} Compared to $P^t$ where the teacher GNN only correctly predicts the nodes near the base node, other propagation cases $(2I_{|\mathcal{V}|} -  \gamma\tilde{A})^{-1} P^t$, $\bar{\Pi}(P^t; \tilde{A})$, and $\bar{\Pi}_\text{fix}(P^t; \tilde{A})$ further spread the correct label information along the graph, while self-correcting the base prediction of $P^t$. Additionally, we evaluate the accuracy of the student MLP on the nodes further than 2-hops away from the base node (see dark green nodes in the left part of Fig.~\ref{fig:chainsexperiment}). We can observe that self-correction indeed benefits the student MLP. For example, for the nodes out of the 2-hop range, using $\Bar{\Pi}_{\text{fix}}(P^t; \tilde{A})$ ({\it i.e.}, {\bf P\&D}-fix) results in the accuracy of $61.74\%$ compared to the case of using $P^t$ showing the accuracy of $46.96\%$. This case study clearly validates the effect of our inverse and approximate recursive propagation functions ({\it i.e.}, $\Pi^{-1}$ and $\bar{\Pi}$, respectively).


\begin{figure}[t]
    \centering
    \subfigure[Cora.]{
    \centering
    \includegraphics[width=0.3\textwidth]{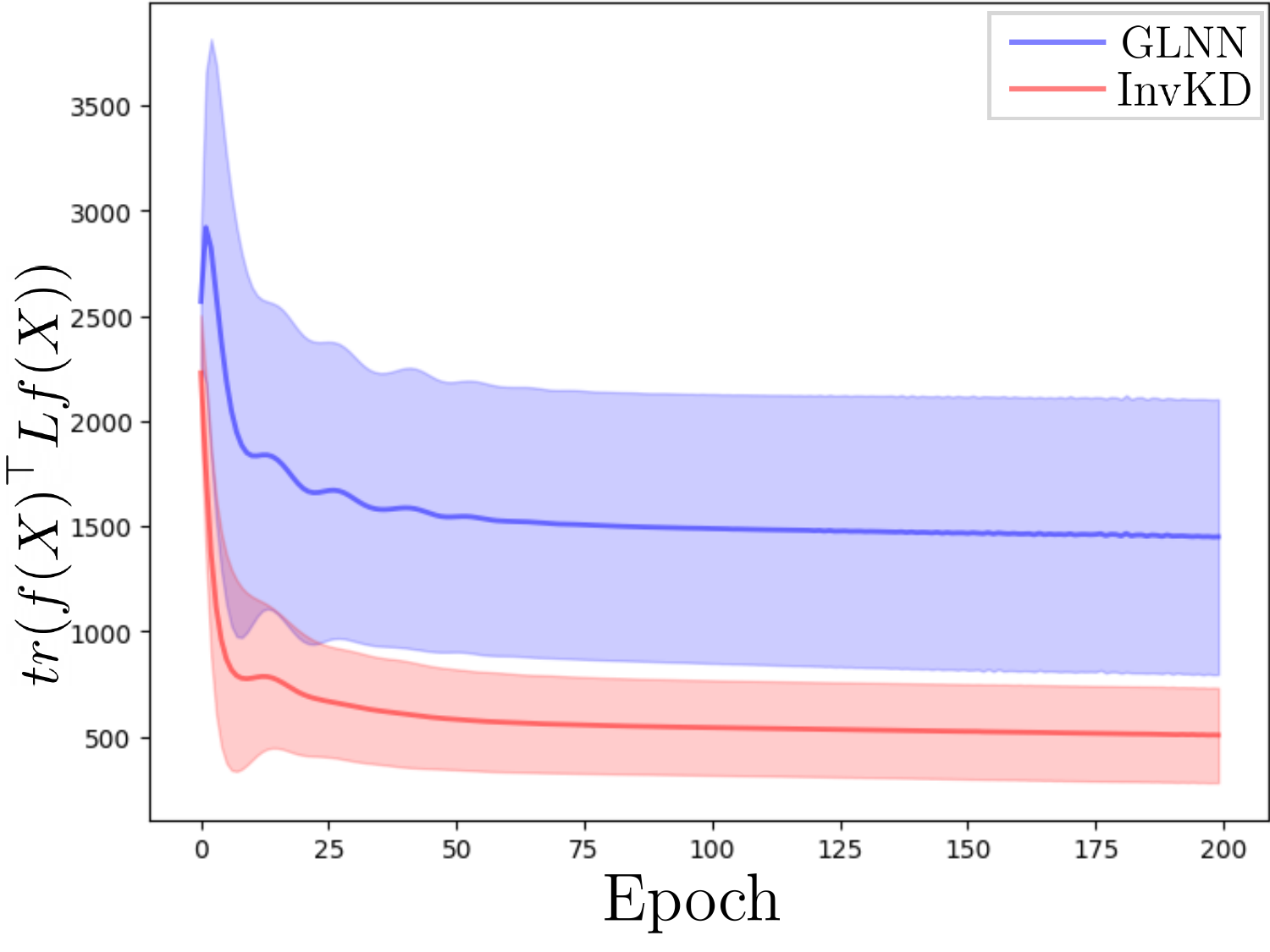}
    \label{figure:app-lap-a}
    }
    \centering
    \subfigure[CiteSeer.]{
    \centering
    \includegraphics[width=0.3\textwidth]{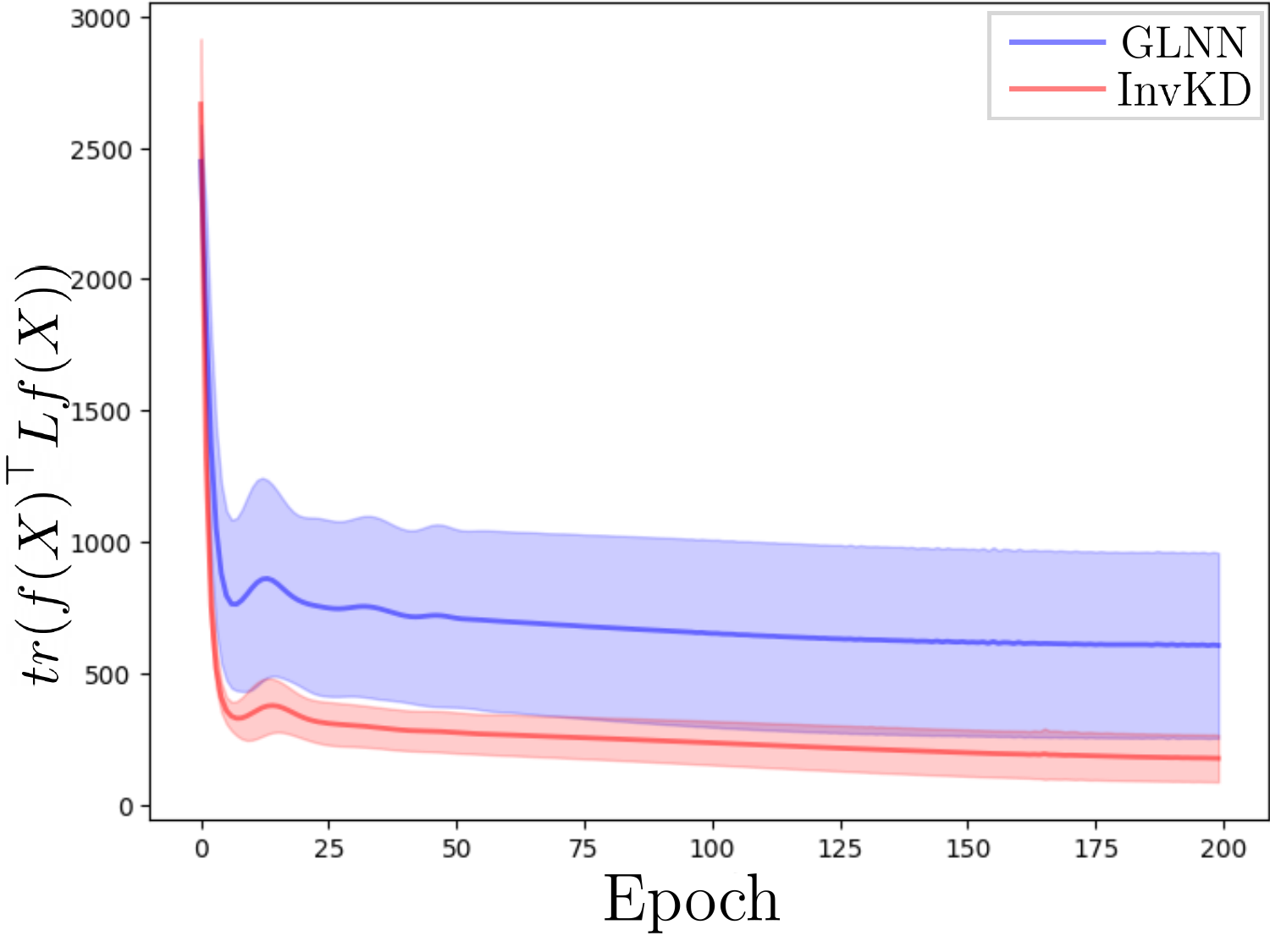}
    \label{figure:app-lap-b}
    }
    \centering
    \subfigure[PubMed.]{
    \centering
    \includegraphics[width=0.3\textwidth]{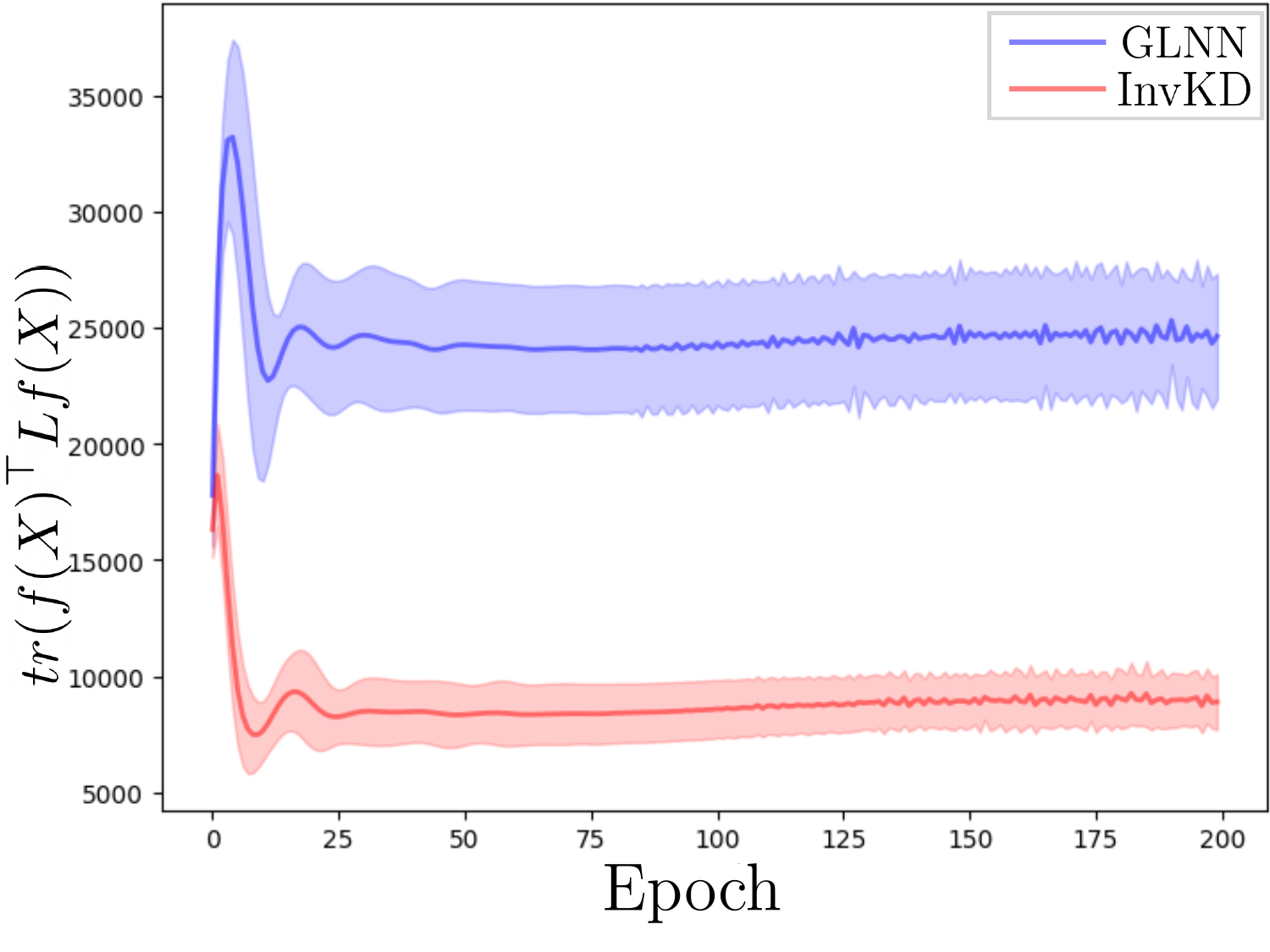}
    \label{figure:app-lap-c}
    }
    \caption{$\text{tr}(f(X)^\top L f(X))$ versus the number of epochs for three different datasets.}
    \label{fig:laplacianvsepoch}
\end{figure}

\subsection{Graph Smoothing Point of View of I\&D} 

We can interpret InvKD as GSD on the student MLP. For such interpretation, we first present the following theorem that connects propagation and GSD:
\begin{theorem}[GSD of PPNP~\citep{ma2021gsd}]
\label{theorem:originalgraphsignaldenoising}
    Given a noisy signal $S \in \mathbb{R}^{|\mathcal{V}| \times |\mathcal{Y}|}$, PPNP~\citep{gasteiger2019ppnpappnp} solves a GSD problem, where the goal is to recover a clean signal $F \in \mathbb{R}^{|\mathcal{V}| \times |\mathcal{Y}|}$ by solving the following optimization problem: 
    \begin{equation}
    \label{ep:graphsignaldenoisingobjective}
        \text{arg}\min_{F} \mathcal{L}_{\text{GSD}} = ||F - S||^2 + (1/(1 -\gamma) - 1) \text{tr}(F^\top L F),
    \end{equation}
    where $\text{tr}(\cdot)$ is the trace of a matrix and $L = I_{|\mathcal{V}|} - \tilde{A}$ is the Laplacian matrix.
\end{theorem}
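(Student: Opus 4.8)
The plan is to treat $\mathcal{L}_{\text{GSD}}$ as an unconstrained convex quadratic in $F$ and to show that its unique minimizer coincides \emph{exactly} with the PPNP propagation $\Pi_{\text{PPR}}(S; \tilde{A}) = (1-\gamma)(I_{|\mathcal{V}|} - \gamma\tilde{A})^{-1}S$ from Eq.~(\ref{eq:originalppr}); this is precisely the sense in which PPNP ``solves'' the denoising problem. As preliminaries, I would note that since $\tilde{A}$ is symmetric, the Laplacian $L = I_{|\mathcal{V}|} - \tilde{A}$ is symmetric, and its eigenvalues lie in $[0,2]$ (the eigenvalues of $\tilde{A}$ lie in $[-1,1]$), so $L$ is positive semidefinite. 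Writing $c \triangleq 1/(1-\gamma) - 1 = \gamma/(1-\gamma) > 0$ for $\gamma \in (0,1)$, the objective $\mathcal{L}_{\text{GSD}}(F) = \|F - S\|^2 + c\,\text{tr}(F^\top L F)$ has Hessian $2(I_{|\mathcal{V}|} + cL)$, which is positive definite; hence $\mathcal{L}_{\text{GSD}}$ is strictly convex and its unique global minimizer is pinned down by the first-order stationarity condition.

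First I would compute the gradient via standard matrix calculus. The data-fidelity term gives $\nabla_F \|F - S\|^2 = 2(F - S)$, and because $L$ is symmetric the smoothness term gives $\nabla_F \text{tr}(F^\top L F) = 2 L F$. Setting $\nabla_F \mathcal{L}_{\text{GSD}} = 0$ yields the normal equation $(F - S) + c L F = 0$, i.e.\ $(I_{|\mathcal{V}|} + cL)F = S$. Since $I_{|\mathcal{V}|} + cL$ is positive definite, it is invertible, so the minimizer is $F^\star = (I_{|\mathcal{V}|} + cL)^{-1}S$.

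The remaining step is an algebraic simplification matching this closed form with Eq.~(\ref{eq:originalppr}). Substituting $L = I_{|\mathcal{V}|} - \tilde{A}$ and $c = \gamma/(1-\gamma)$ and collecting the coefficients of $I_{|\mathcal{V}|}$ and $\tilde{A}$ gives
\[
I_{|\mathcal{V}|} + cL = (1+c)I_{|\mathcal{V}|} - c\tilde{A} = \frac{1}{1-\gamma}\bigl(I_{|\mathcal{V}|} - \gamma\tilde{A}\bigr).
\]
Inverting, and using that $I_{|\mathcal{V}|} - \gamma\tilde{A}$ is invertible for $\gamma \in (0,1)$ (its spectrum is bounded away from zero since $\rho(\gamma\tilde{A}) \le \gamma < 1$), I obtain $F^\star = (1-\gamma)(I_{|\mathcal{V}|} - \gamma\tilde{A})^{-1}S = \Pi_{\text{PPR}}(S; \tilde{A})$, which establishes the claim.

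The computation is essentially routine; the only points requiring care are (i) invoking the symmetry of $L$ so that the trace-term gradient is $2LF$, and (ii) performing the coefficient-collecting step cleanly so that the prefactor $1/(1-\gamma)$ emerges and inverts to the restart probability $1-\gamma$. The only genuinely conceptual observation is that the regularization weight $1/(1-\gamma)-1$ is chosen \emph{precisely} so that the GSD minimizer reproduces the PPR closed form; once convexity and invertibility are secured, no real obstacle remains.
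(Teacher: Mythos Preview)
Your proof is correct and is precisely the standard derivation: set the gradient of the convex quadratic to zero, solve the normal equation, and simplify using $c=\gamma/(1-\gamma)$ to recover the PPR closed form. The paper itself does not supply a proof of this theorem; it is quoted as a known result from~\citep{ma2021gsd} and used only to motivate the graph-smoothing interpretation of InvKD, so there is no alternative argument in the paper to compare against.
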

By interpreting $F = f(X)$ and $S = P^t$ in Theorem~\ref{theorem:originalgraphsignaldenoising}, InvKD can be seen as training the student MLP to fit a given signal $P^t$ with an additional smoothing constraint $(1/(1 -\gamma) - 1) \text{tr}(F^\top L F)$. To explicitly see this effect, we perform an experiment on the Cora, CiteSeer, and PubMed datasets where we plot the regularization term $\text{tr}(f(X)^\top L f(X))$ versus the number of epochs during training for GLNN and InvKD in Fig.~\ref{fig:laplacianvsepoch}. We can observe a much stronger regularization effect for the case of InvKD (red) compared to na\"ive KD (blue). In conclusion, the inverse propagation $\Pi^{-1}$ in InvKD further forces the student MLP to return a signal smoothened over the graph.

\begin{table}[t]
\centering
\caption{Node classification accuracy (\%) according to different $T$'s for three different datasets for {\bf P\&D} and {\bf P\&D}-fix. The best performing cases are underlined, and the performance gain over the case of $T\in\{1,2,5\}$ is displayed in the parenthesis.}
\vskip 0.1in
\label{table:analysisTmini}
\resizebox{0.8\textwidth}{!}{%
\begin{tabular}{lccccccc}
\toprule
 &  & \multicolumn{3}{c}{\textbf{P\&D}} & \multicolumn{3}{c}{\textbf{P\&D}-fix} \\
\cmidrule{3-8}
 $T$ & & Cora & CiteSeer & PubMed & Cora & CiteSeer & PubMed \\
\midrule
\multirow{2}{*}{$\leq$5}& {\em Trans.} & 82.16 & 73.72 & 76.68 & 81.64 & 74.93 & 77.14\\
                        & {\em Ind.} & 71.59 & \underline{72.87} & 76.49 & 71.24 & 72.69 & 76.39\\
\midrule
\multirow{4}{*}{10}     & \multirow{2}{*}{{\em Trans.}} & 82.88 & 73.65 & \underline{77.88} & \underline{82.35} & 74.01 & 77.06 \\
                        & & ($\uparrow$0.72)  & ($\downarrow$0.07) & ($\uparrow$1.20) & ($\uparrow$0.71)  & ($\downarrow$0.92) & ($\downarrow$0.08)\\
\cmidrule{2-8}
                        & \multirow{2}{*}{{\em Ind.}} & \underline{72.27} & 72.21 & 76.57 & 70.59 & 70.59 & \underline{76.59}\\
                        & & ($\uparrow$0.68)  & ($\downarrow$0.66) & ($\uparrow$0.08) & ($\downarrow$0.65)  & ($\downarrow$2.10) & ($\uparrow$0.20)\\
\midrule
\multirow{4}{*}{20}    & \multirow{2}{*}{{\em Trans.}} & \underline{83.03} & \underline{73.74} & 76.56 & 81.85 & 74.04 & 77.54\\
                       & & ($\uparrow$0.87)  & ($\uparrow$0.02) & ($\downarrow$0.12) & ($\uparrow$0.21)  & ($\downarrow$0.89) & ($\uparrow$0.40)\\
\cmidrule{2-8}
                        & \multirow{2}{*}{{\em Ind.}} & 71.31 & 72.21 & \underline{76.62} & \underline{71.85} & 71.85 & 76.36\\
                        & & ($\downarrow$0.28)  & ($\downarrow$0.66) & ($\uparrow$0.13) & ($\uparrow$0.61)  & ($\downarrow$0.84) & ($\downarrow$0.03)\\
\midrule
\multirow{4}{*}{50}     & \multirow{2}{*}{{\em Trans.}} & 82.38 & 73.38 & 77.01 & 82.29 & \underline{74.97} & \underline{78.11}\\
                        & & ($\uparrow$0.22)  & ($\downarrow$0.34) & ($\uparrow$0.33) & ($\uparrow$0.65)  & ($\uparrow$0.04) & ($\uparrow$0.97)\\
\cmidrule{2-8}
                        & \multirow{2}{*}{{\em Ind.}} & 71.85 & 71.77 & 76.48 & 71.66 & \underline{72.76} & 76.58\\
                        & & ($\uparrow$0.26)  & ($\downarrow$1.10) & ($\downarrow$0.01) & ($\uparrow$0.42)  & ($\uparrow$0.07) & ($\uparrow$0.19)\\
\bottomrule
\end{tabular}
}
\end{table}

\subsection{Does Deeper and Stronger Propagation Result in Better MLPs?}
\label{subsection:hyperparameterstudy}
We also investigate how the total number of iterations $T$ and the propagation strength $\gamma$ in Eq.~(\ref{equation:labelpropagation}) affects the performance to validate our claims {\bf (C1)} and {\bf (C2)} in Section~\ref{section:proposedmethod}. Here, we perform the analysis along with {\bf P\&D} as our main framework. To see how the performance behaves with $T$, we consider four cases: $T \in \{1,2,5\}$, $T = 10$, $T = 20$, and $T = 50$.\footnote{For $T \in \{1,2,5\}$, we choose the one leading to the best performance.} We evaluate the performance gain over to the first case ({\it i.e.}, $T \in \{1,2,5\}$) using the Cora, Citeseer, and Pubmed datasets. Table~\ref{table:analysisTmini} shows that, in both transductive/inductive settings, the best performance can be achieved when $T$ is sufficiently large ({\it i.e.}, $T\ge 10$).\footnote{The inductive settings is identical to the one in the production setting.} Next, to see how the performance behaves with $\gamma$, we consider two cases: $\gamma = 0.1$ and $\gamma = 0.9$. Table~\ref{table:analysisGmini} shows that stronger propagation ({\it i.e.}, $\gamma=0.9$) leads to higher performance in both cases.


\begin{table}[t]
\caption{Node classification accuracy (\%) according to different $\gamma$'s for three different datasets for \textbf{P\&D} and \textbf{P\&D}-fix. The performance gain of the case of $\gamma = 0.9$ over the case of $\gamma = 0.1$ is displayed in the parenthesis.}
\vskip 0.1in
\label{table:analysisGmini}
\centering
\small
\resizebox{1\textwidth}{!}{%
\begin{tabular}{lcccccccc}
\toprule
    & \multicolumn{4}{c}{\textbf{P\&D}} & \multicolumn{4}{c}{\textbf{P\&D}-fix} \\
\cmidrule{2-9}
\multirow{2}{*}{Dataset}& \multicolumn{2}{c}{{\em Transductive}} & \multicolumn{2}{c}{{\em Inductive}} & \multicolumn{2}{c}{{\em Transductive}} & \multicolumn{2}{c}{{\em Inductive}}\\
\cmidrule{2-9}
                        & $\gamma = 0.1$ & $\gamma = 0.9$ & $\gamma = 0.1$ & $\gamma = 0.9$ & $\gamma = 0.1$ & $\gamma = 0.9$ & $\gamma = 0.1$ & $\gamma = 0.9$\\
\midrule
\multirow{2}{*}{Cora}    & 80.35 & 82.16 & 70.87 & 71.99 & 80.85 & 82.35 & 69.93 & 71.24\\
                         & & ($\uparrow$1.81) & & ($\uparrow$1.12) & & ($\uparrow$1.50) & & ($\uparrow$1.31) \\
\midrule
\multirow{2}{*}{CiteSeer}    & 72.70 & 73.38 & 71.60 & 72.87 & 74.47 & 74.93 & 69.93 & 72.69\\
                         & & ($\uparrow$0.68) & & ($\uparrow$1.27) & & ($\uparrow$0.46) & & ($\uparrow$2.76) \\
\midrule
\multirow{2}{*}{PubMed}    & 76.56 & 77.88 & 75.84 & 76.49 & 76.89 & 78.11 & 75.79 & 76.58\\
                         & & ($\uparrow$1.32) & & ($\uparrow$0.65) & & ($\uparrow$1.22) & & ($\uparrow$0.79) \\
\bottomrule
\end{tabular}
}
\end{table}


\subsection{Replacing $\Pi^{-1}$ with Alternative Operations}

\begin{table}[t]
\caption{Performance comparison when three different loss functions are used during training in the transductive setting on three different datasets.}
\label{table:indanalysis}
\begin{center}
\begin{small}
\begin{tabular}{lccc}
\toprule
Dataset  & $\mathcal{L}_{\text{conv}}$ & $\mathcal{L}_{\text{Distill}}$ & $\mathcal{L}_{InvKD}$  \\
\midrule
Cora & 64.78 ± 0.92 & 73.84 ± 0.52 & 76.20 ± 0.48 \\
CiteSeer & 69.71 ± 0.48 & 70.23 ± 0.40 & 72.43 ± 0.56 \\
PubMed & 60.27 ± 9.64 & 76.77 ± 1.02  & 77.67 ± 0.73 \\
\bottomrule
\end{tabular}
\end{small}
\end{center}
\vskip -0.1in
\end{table}

In InvKD, our inverse propagation function $\Pi^{-1}$ reveals a form similar to the Laplacian matrix $I_{|\mathcal{V}|}-\tilde{A}$ (see Eq.~(\ref{eq:trainingobjective})). Then, a natural question raising is ``Is $\Pi^{-1}$ in InvKD replaceable with alternative operations during distillation?''. To answer this question, we run a simple experiment by taking into account two alternatives. First, one can expect that the student MLP model $f$ may also learn the structural information when a convolution function ({\it i.e.}, the symmetrically normalized adjacency matrix) is applied to the output of the student MLP instead of $\Pi^{-1}$ since the gradients of the model parameters are also influenced by $\tilde{A}$. Thus, we consider training the student MLP with $\mathcal{L}_{\text{conv}} \triangleq \text{KL}(\tilde{A}P^s, P^t)$. Second, as another alternative, we use $\mathcal{L}_{\text{Distill}}$ in Eq.~(\ref{eq:distillloss}), which does not contain any additional operation on $P^s$. We follow the same model configurations as those in Section~\ref{section:maineresults}, while using the node splits of~\citep{yang2016ccp} with full-batch training in the transductive setting. Table~\ref{table:indanalysis} summarizes the experimental results for three different cases, $\mathcal{L}_\text{conv}$, $\mathcal{L}_\text{Distill}$, $\mathcal{L}_\text{InvKD}$, when three datasets including Cora, CiteSeer, and PubMed are used in the transductive setting. Interestingly, we can observe that the case of using $\mathcal{L}_{\text{conv}}$ exhibits much lower performance than that of other two cases. This is because, while using $\mathcal{L}_{\text{conv}}$ explicitly accommodates the graph structure during distillation, it rather alleviates the pressure for the student MLP to learn the graph structure due to the fact that $\tilde{A}$ is multiplied during training; however, during inference, the student MLP acquires low information on the graph structure with $\tilde{A}$ that is no longer injected, which eventually harms the performance. Hence, this implies that using a na\"ive alternative operation may deteriorate the performance and a judicious design of the propagation function is essential in guaranteeing state-of-the-art performance.

\subsection{Distillation from GNNs with Decoupled Propagation and Transformation Phases}

\begin{figure*}[t]
    \centering
    \subfigure[Cora.]{
    \centering
    \includegraphics[width=0.3\textwidth]{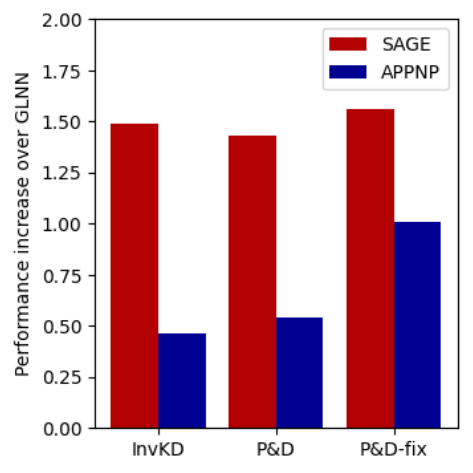}
    \label{figure2:main-a}
    }
    \centering
    \subfigure[Citeseer.]{
    \centering
    \includegraphics[width=0.3\textwidth]{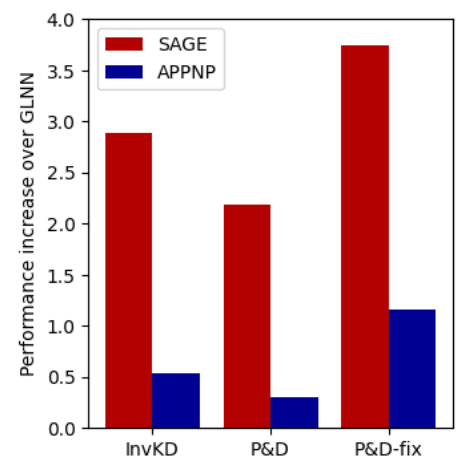}
    \label{figure2:main-b}
    }
    \centering
    \subfigure[Pubmed.]{
    \centering
    \includegraphics[width=0.3\textwidth]{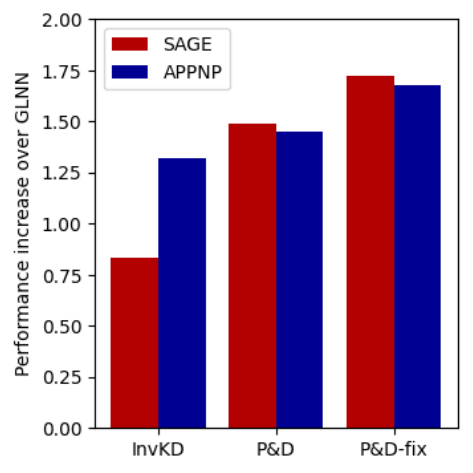}
    \label{figure2:main-c}
    }
    \caption{Performance increase of InvKD, \textbf{P\&D}, and \textbf{P\&D}-fix over GLNN across three different datasets under two different teacher GNNs. The red and blue bars in each subfigure indicate the performance increase over GLNN using GraphSAGE and APPNP as the teacher GNN, respectively.}
    \label{figure2:main}
\end{figure*}

In our main experiments, we have employed GraphSAGE~\citep{hamilton2017graphsage} as the teacher GNN. However, the performance of the student MLP may change under different types of GNN. Since our framework are inspired from such GNNs that decouple propagation and transformation~\citep{gasteiger2019ppnpappnp, huang2021cns, bojchevski2020scalegnnappnp, chien2021generalappnp}, we can expect that the performance benefits may change when we change the teacher GNN. To analyze this, we perform an additional experiment by replacing the teacher GNN with APPNP~\citep{gasteiger2019ppnpappnp}, a representative model that decouples propagation and transformation. 

Figure~\ref{figure2:main} shows the performance increase of InvKD, \textbf{P\&D}, and \textbf{P\&D}-fix over GLNN for the Cora, Citeseer, and Pubmed datasets under two different settings: employing GraphSAGE (red bar) and APPNP (blue bar) as the teacher GNNs, respectively. We observe that, except for one case, the benefit of our framework is diminished to some extent when the student MLP is distilled from APPNP. This is because the potential performance benefit during distillation becomes lower due to the similarities of the propagation functions used for both APPNP and our framework. Still, the student MLP is able to outperform GLNN for all cases, which is due to the capability of finding a better model parameter via the additional propagation-based regularization effect during distillation.

\section{Theoretical Analysis}
\label{section:theoreticalanalysis}
In this section, we provide a theoretical analysis of propagating the teacher's output $P^t$ along the graph in the context of {\it self-correction}, which was hinted from our earlier case study in Section~\ref{subsection:casestudy}. {\bf P\&D} relies on the assumption that the underlying graph has high levels of homophily, which measures the ratio of edges where the two connected nodes have the same class label~\citep{zhu2020h2gcn}. In this setting, we are interested in analyzing the condition where the prediction of a (incorrectly-predicted) node becomes corrected after one iteration of propagation by Eq.~(\ref{equation:labelpropagation}). We start by formally addressing basic settings and assumptions, which essentially follow those of~\citep{zhu2020h2gcn}. Let us assume that the underlying graph $G$ is regular ({\it i.e.}, all nodes have a degree of $d$) and $h\in[0,1]$ portion of neighbors have the same label for all nodes in $v\in\mathcal{V}$. For each node, the teacher GNN is assumed to be assigned an output probability vector having a probability $p \in [0,1]$ (with $p > 1/|\mathcal{Y}|$) for the true class label and another probability $(1-p)/(|\mathcal{Y}|-1)$ for the rest of the classes if the teacher GNN always makes predictions correctly.

Now, without loss of generality, let us assume that the teacher GNN makes an {\em incorrect} prediction for a particular node of interest $v_{*}$ with class 0 as its ground truth label by assigning a new output probability vector
\begin{equation}
    P^t[*,:] = \left[q, \dfrac{1-q}{|\mathcal{Y}|-1}, \cdots, \dfrac{1-q}{|\mathcal{Y}|-1}\right]
\end{equation}
with $0 < q < 1/|\mathcal{Y}|$, thus no longer assigning class 0 as its prediction. Additionally, for the \textit{rest} of the nodes, we introduce an error ratio $\epsilon \in (0, 1)$ to the teacher GNN's predictions, which assumes that the teacher model provides incorrect predictions for $\epsilon(|\mathcal{V}|-1)$ nodes (excluding $v_{*}$ itself). For the sake of simplicity, we assume that the probability of a node being incorrect by the teacher GNN is independent of its ground truth class label, which establishes the following theorem:

\begin{theorem}
\label{theorem:smoothingcorrectionwitherror}
    Suppose that the teacher GNN provides incorrect predictions for $\epsilon(|\mathcal{V}| - 1)$ nodes other than node $v_*$ where $\epsilon\in(0,1)$. Then, using one iteration of propagation in Eq.~(\ref{equation:labelpropagation}), the prediction of node $v_*$ gets corrected if
    \begin{equation}
    \label{equation:smoothingcorrectionwitherror}
        q \in \left[\max\left(0, \dfrac{1}{|\mathcal{Y}|} - \dfrac{\gamma}{1 - \gamma}\left( C - b(\epsilon)\right)\right), \dfrac{1}{|\mathcal{Y}|}\right],
    \end{equation}
    where $q$ is the output probability of $v_*$ corresponding to class 0, $C$ is approximately $\left(1 + \dfrac{1}{|\mathcal{Y}|}\right)hp - \dfrac{h+p}{|\mathcal{Y}|}$, $\gamma$ is the propagation strength in Eq.~(\ref{equation:labelpropagation}), and $b(\epsilon) = \left(C + \dfrac{hp}{|\mathcal{Y}|}\right)\epsilon$.
\end{theorem}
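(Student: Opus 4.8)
The plan is to reduce the statement to a single scalar inequality comparing the post-propagation score of the true class $0$ against that of a competing class, and then to evaluate both sides in expectation through the homophily level $h$, the confidence $p$, and the error ratio $\epsilon$.

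First I would exploit regularity. Since $G$ is $d$-regular, the symmetrically normalized adjacency reduces to $\tilde{A}=\tfrac{1}{d}A$, so one step of Eq.~(\ref{equation:labelpropagation}) sends each coordinate of $v_*$ to a convex combination of its own value and the \emph{average} of its neighbors' values. Writing $\bar{N}_c\triangleq\tfrac{1}{d}\sum_{j\sim *}P^t[j,c]$ for the expected neighbor average in class $c$, the propagated vector at $v_*$ has entry $\gamma\bar{N}_0+(1-\gamma)q$ in class $0$ and $\gamma\bar{N}_c+(1-\gamma)\tfrac{1-q}{|\mathcal{Y}|-1}$ in each class $c\neq 0$. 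Because the neighbor class distribution is symmetric across the $|\mathcal{Y}|-1$ wrong classes, all $\bar{N}_c$ with $c\neq 0$ coincide, so ``$v_*$ gets corrected'' is equivalent to the single inequality $\gamma\bar{N}_0+(1-\gamma)q\ge\gamma\bar{N}_c+(1-\gamma)\tfrac{1-q}{|\mathcal{Y}|-1}$.

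The key simplification I would use is probability conservation: since every row of $P^t$ sums to one, so does the neighbor average, giving $\bar{N}_0+(|\mathcal{Y}|-1)\bar{N}_c=1$ and hence $\bar{N}_0-\bar{N}_c=\tfrac{|\mathcal{Y}|\bar{N}_0-1}{|\mathcal{Y}|-1}$. Substituting this into the inequality and solving for $q$ collapses everything to $q\ge\tfrac{1}{|\mathcal{Y}|}-\tfrac{\gamma}{1-\gamma}\big(\bar{N}_0-\tfrac{1}{|\mathcal{Y}|}\big)$. Writing $C$ for the value of $\bar{N}_0-\tfrac{1}{|\mathcal{Y}|}$ in the error-free case ($\epsilon=0$), where $\bar{N}_0=hp+(1-h)\tfrac{1-p}{|\mathcal{Y}|-1}$, and using the approximation $|\mathcal{Y}|-1\approx|\mathcal{Y}|$ recovers the announced closed form $C\approx\big(1+\tfrac{1}{|\mathcal{Y}|}\big)hp-\tfrac{h+p}{|\mathcal{Y}|}$. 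The upper limit $q\le 1/|\mathcal{Y}|$ is just the standing assumption that the teacher is initially wrong on $v_*$, and the $\max(0,\cdot)$ enforces $q>0$.

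It then remains to evaluate $\bar{N}_0$ in expectation as a function of $\epsilon$, which is where $b(\epsilon)$ enters. I would split $v_*$'s neighbors into the $hd$ homophilous (class-$0$) neighbors and the $(1-h)d$ heterophilous ones spread uniformly over the remaining classes. A correctly predicted neighbor contributes mass $p$ or $\tfrac{1-p}{|\mathcal{Y}|-1}$ to class $0$ accordingly; for an incorrect one I would adopt the natural model that its peak mass $p$ is relocated to a uniformly random wrong class. Taking the expectation over the independent error events (each of probability $\epsilon$) and again approximating $|\mathcal{Y}|-1\approx|\mathcal{Y}|$, a short computation gives $\bar{N}_0-\tfrac{1}{|\mathcal{Y}|}=C-\big(C+\tfrac{hp}{|\mathcal{Y}|}\big)\epsilon$, i.e.\ $C$ is effectively replaced by $C-b(\epsilon)$ with $b(\epsilon)=\big(C+\tfrac{hp}{|\mathcal{Y}|}\big)\epsilon$, which yields Eq.~(\ref{equation:smoothingcorrectionwitherror}).

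The main obstacle I anticipate is the bookkeeping in this last step: the heterophilous neighbors feed class $0$ both when correct and, occasionally, when their misplaced peak lands on class $0$ (an event of probability $\tfrac{1}{|\mathcal{Y}|-1}$), and keeping the competing $O(1/|\mathcal{Y}|)$ and $O(1/|\mathcal{Y}|^2)$ contributions straight is precisely what forces the stated approximation and produces the exact coefficient $C+\tfrac{hp}{|\mathcal{Y}|}$. I would also make explicit that comparing against a single representative wrong class is lossless (guaranteed by the symmetric neighbor distribution) and that replacing $|\mathcal{Y}|-1$ by $|\mathcal{Y}|$ perturbs only lower-order terms, which is the sense in which $C$ is ``approximately'' the displayed quantity.
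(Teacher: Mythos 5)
Your proposal is correct and arrives at exactly the stated bound, but it organizes the argument differently from the paper in one substantive way. The paper's proof works entrywise on the full propagated matrix $(\gamma\tilde{A}+(1-\gamma)I)P^t$ and explicitly computes \emph{both} post-propagation coordinates of $v_*$: the class-$0$ entry $\beta_{q,\epsilon}$ and the generic wrong-class entry $\beta'_{q,\epsilon}$, the latter requiring four separate terms to be re-derived under the error model before the comparison $\beta_{q,\epsilon}>\beta'_{q,\epsilon}$ is unwound into the interval for $q$. Your key lemma — that on a $d$-regular graph the operator $\gamma\tilde{A}+(1-\gamma)I$ is row-stochastic, so the neighbor averages satisfy $\sum_c\bar{N}_c=1$, and that the $|\mathcal{Y}|-1$ wrong classes are exchangeable under the assumed uniform error model, so $\bar{N}_c=\frac{1-\bar{N}_0}{|\mathcal{Y}|-1}$ for all $c\neq 0$ — eliminates the competing coordinate entirely and collapses the correction condition to
\begin{equation*}
q \;\ge\; \frac{1}{|\mathcal{Y}|}-\frac{\gamma}{1-\gamma}\left(\bar{N}_0-\frac{1}{|\mathcal{Y}|}\right),
\end{equation*}
leaving only the class-$0$ neighbor mass $\bar{N}_0$ to be computed. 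Your error model for $\bar{N}_0$ coincides with the paper's (errant homophilous neighbors send $\frac{1-p}{|\mathcal{Y}|-1}$ to class $0$; errant heterophilous neighbors send $p$ with probability $\frac{1}{|\mathcal{Y}|-1}$ and $\frac{1-p}{|\mathcal{Y}|-1}$ otherwise), and discarding the $O(1/|\mathcal{Y}|^2)$ remainder reproduces $b(\epsilon)=\left(C+\frac{hp}{|\mathcal{Y}|}\right)\epsilon$ exactly, in the same approximation regime the paper invokes via $\frac{|\mathcal{Y}|-2}{|\mathcal{Y}|-1}\approx 1$. What your route buys is much lighter bookkeeping and a transparent reading of the threshold — it depends only on the excess of class-$0$ neighbor mass over the uniform level $1/|\mathcal{Y}|$ — whereas the paper's heavier computation verifies the wrong-class coordinate directly, which is precisely what your conservation-plus-symmetry observation renders unnecessary; beyond that, the two proofs are the same one-step scalar comparison under the same assumptions.
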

\begin{proof} 
We refer to~\ref{section:SupplHomophily} for the proof.
\end{proof}

From Theorem~\ref{theorem:smoothingcorrectionwitherror}, we can see that an increase of the error $\epsilon$ reduces the range of $q$, enabling the prediction of nodes to get corrected, which means that incorrect predictions from the teacher GNN introduce a more unforgiving environment for self-correction. Moreover, it is worth noting that the acceptable amount of error such that corrections via propagation are possible is upper-bounded by
\begin{equation}
    \epsilon < \frac{|\mathcal{Y}|h - 1}{(|\mathcal{Y}| + 1)h - 1},
\end{equation}
which monotonically increases with $h \in (1/|\mathcal{Y}|, 1]$. This implies that stronger homophily of the underlying graph will result in more tolerance of the prediction error from the teacher GNN.

\section{Conclusion}
\label{section:conclusion}
We presented {\bf P\&D} and its variant {\bf P\&D}-fix, simple yet effective GNN-to-MLP KD frameworks to boost the performance of MLP models trained by distillation from a teacher GNN model. We empirically showed that applying an approximate propagation $\Bar{\Pi}$ to the teacher GNN's output eventually benefits the student MLP model after KD on real-world graph benchmark datasets for both transductive and inductive settings. Our future work includes replacing the proposed propagation with a more sophisticated one to further improve the performance.

\appendix

\section{Proof of Theorem 2}
\label{section:SupplHomophily}

Before proving Theorem~\ref{theorem:smoothingcorrectionwitherror}, we first show preliminary calculations that are needed before we proceed with the main proof. Here, we start with the simplest case. Following~\citep{zhu2020h2gcn}, we calculate the result when one-hot labels are being propagated, which will be used for later calculations. First, without loss of generality, we reorder the label matrix $Y \in \{0,1\}^{|V| \times |\mathcal{Y}|}$ (and also the rows and columns of adjacency matrix $A$) as follows:
\begin{equation}
\Scale[0.8]{
    Y =
    \begin{bmatrix}
        1 & 0 & \cdots & 0 \\
        \vdots & \vdots & \ddots & \vdots \\
        1 & 0 & \cdots & 0 \\
        0 & 1 & \cdots & 0 \\
        \vdots & \vdots & \ddots & \vdots \\
        0 & 1 & \cdots & 0 \\
        \vdots & \vdots &  & \vdots \\
        0 & 0 & \cdots & 1 \\
        \vdots & \vdots & \ddots & \vdots \\
        0 & 0 & \cdots & 1 \\
    \end{bmatrix}.
}
\end{equation}
We then calculate $(A + I)Y$, which we will progressively modify to $\Bar{\Pi}$ during the rest of this section. We take advantage of the neighbor assumptions, which results in:
\begin{equation}
\label{eq:naivelp}
\Scale[0.8]{
    (A + I)Y =
    \begin{bmatrix}
        hd + 1 &  \dfrac{1-h}{|\mathcal{Y}| - 1}d & \cdots & \dfrac{1-h}{|\mathcal{Y}| - 1}d \\
        \vdots & \vdots & \ddots & \vdots \\
        hd + 1 & \dfrac{1-h}{|\mathcal{Y}| - 1}d & \cdots & \dfrac{1-h}{|\mathcal{Y}| - 1}d \\
        \dfrac{1-h}{|\mathcal{Y}| - 1}d & hd + 1 & \cdots & \dfrac{1-h}{|\mathcal{Y}| - 1}d \\
        \vdots & \vdots & \ddots & \vdots \\
        \dfrac{1-h}{|\mathcal{Y}| - 1}d & hd + 1 & \cdots & \dfrac{1-h}{|\mathcal{Y}| - 1}d \\
        \vdots & \vdots &  & \vdots \\
        \dfrac{1-h}{|\mathcal{Y}| - 1}d & \dfrac{1-h}{|\mathcal{Y}| - 1}d & \cdots & hd + 1 \\
        \vdots & \vdots & \ddots & \vdots \\
        \dfrac{1-h}{|\mathcal{Y}| - 1}d & \dfrac{1-h}{|\mathcal{Y}| - 1}d & \cdots & hd + 1 \\
    \end{bmatrix}.
}
\end{equation}

We now start to modify this result of calculating $(A + I)Y$ that more resembles Eq.~(\ref{equation:labelpropagation}), except that we are still propagating one-hot labels. First, we introduce $\gamma$ in Eq.~(\ref{eq:naivelp}) and calculate $(\gamma A + (1 - \gamma)I)Y$: 
\begin{equation}
\Scale[0.8]{
    (\gamma A + (1 - \gamma)I)Y =
    \begin{bmatrix}
        \gamma hd + (1 - \gamma) &  \gamma\dfrac{1-h}{|\mathcal{Y}| - 1}d & \cdots & \gamma\dfrac{1-h}{|\mathcal{Y}| - 1}d \\
        \vdots & \vdots & \ddots & \vdots \\
        \gamma hd + (1 - \gamma) & \gamma\dfrac{1-h}{|\mathcal{Y}| - 1}d & \cdots & \gamma\dfrac{1-h}{|\mathcal{Y}| - 1}d \\
        \gamma\dfrac{1-h}{|\mathcal{Y}| - 1}d & \gamma hd + (1 - \gamma) & \cdots & \gamma\dfrac{1-h}{|\mathcal{Y}| - 1}d \\
        \vdots & \vdots & \ddots & \vdots \\
        \gamma\dfrac{1-h}{|\mathcal{Y}| - 1}d & \gamma hd + (1 - \gamma) & \cdots & \gamma \dfrac{1-h}{|\mathcal{Y}| - 1}d \\
        \vdots & \vdots &  & \vdots \\
        \gamma\dfrac{1-h}{|\mathcal{Y}| - 1}d & \gamma\dfrac{1-h}{|\mathcal{Y}| - 1}d & \cdots & \gamma hd + (1 - \gamma) \\
        \vdots & \vdots & \ddots & \vdots \\
        \gamma\dfrac{1-h}{|\mathcal{Y}| - 1}d & \gamma\dfrac{1-h}{|\mathcal{Y}| - 1}d & \cdots & \gamma hd + (1 - \gamma) \\
    \end{bmatrix}.
}
\end{equation}

Finally, we replace $A$ with $\Tilde{A} = D^{-1/2} A D^{-1/2}$. Since each node has the same degree $d$, each signal is now multiplied with $(1/\sqrt{d})^2 = 1/d$ during propagation, eventually cancelling out the $d$'s:
\begin{equation}
\Scale[0.8]{
    (\gamma \Tilde{A} + (1 - \gamma)I)Y =
    \begin{bmatrix}
        \gamma h + (1 - \gamma) &  \gamma\dfrac{1-h}{|\mathcal{Y}| - 1} & \cdots & \gamma\dfrac{1-h}{|\mathcal{Y}| - 1} \\
        \vdots & \vdots & \ddots & \vdots \\
        \gamma h + (1 - \gamma) & \gamma\dfrac{1-h}{|\mathcal{Y}| - 1} & \cdots & \gamma\dfrac{1-h}{|\mathcal{Y}| - 1} \\
        \gamma\dfrac{1-h}{|\mathcal{Y}| - 1} & \gamma h + (1 - \gamma) & \cdots & \gamma\dfrac{1-h}{|\mathcal{Y}| - 1} \\
        \vdots & \vdots & \ddots & \vdots \\
        \gamma\dfrac{1-h}{|\mathcal{Y}| - 1} & \gamma h + (1 - \gamma) & \cdots & \gamma \dfrac{1-h}{|\mathcal{Y}| - 1} \\
        \vdots & \vdots &  & \vdots \\
        \gamma\dfrac{1-h}{|\mathcal{Y}| - 1} & \gamma\dfrac{1-h}{|\mathcal{Y}| - 1} & \cdots & \gamma h + (1 - \gamma) \\
        \vdots & \vdots & \ddots & \vdots \\
        \gamma\dfrac{1-h}{|\mathcal{Y}| - 1} & \gamma\dfrac{1-h}{|\mathcal{Y}| - 1} & \cdots & \gamma h + (1 - \gamma) \\
    \end{bmatrix}.
}
\end{equation}

Now, we are ready to change $Y$ to a matrix of probability vectors ({\it i.e.}, $P^t$), which finally results in calculating one iteration of $\Bar{\Pi}$. In our analysis, we replace $Y$ with a $P^t \in [0,1]^{|V| \times |\mathcal{Y}|}$, where the correct label is predicted with probability $p$ and the rest of the probabilities $(1-p)$ is distributed uniformly for the rest of the classes:
\begin{equation}
\Scale[0.8]{
    P^t =
    \begin{bmatrix}
        p & \dfrac{1-p}{|\mathcal{Y}| - 1} & \cdots & \dfrac{1-p}{|\mathcal{Y}| - 1} \\
        \vdots & \vdots & \ddots & \vdots \\
        p & \dfrac{1-p}{|\mathcal{Y}| - 1} & \cdots & \dfrac{1-p}{|\mathcal{Y}| - 1} \\
        \dfrac{1-p}{|\mathcal{Y}| - 1} & p & \cdots & \dfrac{1-p}{|\mathcal{Y}| - 1} \\
        \vdots & \vdots & \ddots & \vdots \\
        \dfrac{1-p}{|\mathcal{Y}| - 1} & p & \cdots & \dfrac{1-p}{|\mathcal{Y}| - 1} \\
        \vdots & \vdots &  & \vdots \\
        \dfrac{1-p}{|\mathcal{Y}| - 1} & \dfrac{1-p}{|\mathcal{Y}| - 1} & \cdots & p \\
        \vdots & \vdots & \ddots & \vdots \\
        \dfrac{1-p}{|\mathcal{Y}| - 1} & \dfrac{1-p}{|\mathcal{Y}| - 1} & \cdots & p \\
    \end{bmatrix}.
}
\end{equation}
Assuming $1/|\mathcal{Y}| < p \leq 1$ implies that the teacher GNN has the accuracy of 1 ({\it i.e.}, perfect prediction). Note that setting $p = 1$ reverts $P^t$ to $Y$. Now, calculating for $(\gamma \Tilde{A} + (1 - \gamma)I)P^t$ results in:
\begin{equation*}
\Scale[0.8]{(\gamma \Tilde{A} + (1 - \gamma)I)P^t=
    \begin{bmatrix}
        \beta &  \beta' & \cdots & \beta' \\
        \vdots & \vdots & \ddots & \vdots \\
        \beta & \beta' & \cdots & \beta' \\
        \beta' & \beta & \cdots & \beta' \\
        \vdots & \vdots & \ddots & \vdots \\
        \beta' & \beta & \cdots & \beta' \\
        \vdots & \vdots &  & \vdots \\
        \beta' & \beta' & \cdots & \beta \\
        \vdots & \vdots & \ddots & \vdots \\
        \beta' & \beta' & \cdots & \beta \\
    \end{bmatrix}
    },
\end{equation*}
where
\begin{align}
    \beta &= (1 - \gamma)p + \gamma hp + \gamma\dfrac{1-h}{|\mathcal{Y}| - 1}(1-p) \label{eq:beta_q}\\
    \beta' &= (1 - \gamma)\dfrac{1-p}{|\mathcal{Y}| - 1} + \gamma\dfrac{h}{|\mathcal{Y}| - 1}(1-p) + \gamma\dfrac{1-h}{|\mathcal{Y}| - 1}p + \gamma(1-h)\dfrac{|\mathcal{Y}| - 2}{(|\mathcal{Y}| - 1)^2}(1-p)\label{eq:beta'_q}.
\end{align}


Now, we are ready to prove Theorem~\ref{theorem:smoothingcorrectionwitherror}.

In order to prove the theorem, we basically follow the same calculation steps by recalculating the interval for $v_* = v_1$ while including $\epsilon$, which will require more careful considerations. We start with revisiting Eqs.~(\ref{eq:beta_q}) and (\ref{eq:beta'_q}). In Eq.~(\ref{eq:beta_q}), there are three terms, {\it i.e.}, $\gamma hp$, $(1 - \gamma)q$, and $\gamma(1-p)\dfrac{1-h}{|\mathcal{Y}|-1}$. For ease of notations, we denote the set of nodes with the same labels with $v_1$ as $S$ and the rest of neighbors as $S'$ in the previous setting where we assumed that nodes other than $v_1$ were all correct.

Starting with $(1 - \gamma)q$, this term calculates the effect of self-propagation and therefore remains unchanged in the new setting. The term $\gamma hp$ calculates the influence that is aggregated from nodes in $S$. In the new setting, only $(1-\epsilon)hd$ nodes propagate the probability $p$ (Note that $|S| = hd$), and therefore $\gamma hp$ is changed into $\gamma h\left((1-\epsilon)p + \epsilon \dfrac{1 - p}{|\mathcal{Y}| - 1}\right)$. Next, the term $\gamma(1-h)\dfrac{1-p}{|\mathcal{Y}|-1}$ calculates the influence aggregated from nodes in $S'$, which previously all propagated $\dfrac{1-p}{|\mathcal{Y}|-1}$. When $\epsilon=0$, the number of these nodes is $|S'|=(1-h)d$, and in the new setting, $\epsilon (1-h)d$ has their predictions changed, where $\dfrac{1}{|\mathcal{Y}|-1}$ of them now propagates $p$. Therefore, this term is now changed into $\gamma (1-h)\left( \epsilon \dfrac{1}{|\mathcal{Y}|-1} p + \dfrac{|\mathcal{Y}|-1 - \epsilon}{(|\mathcal{Y}|-1)^2}(1-p)\right)$. In summary, in the new setting, $\beta_q$ becomes 

\begin{align}
    \beta_{q, \epsilon} &= (1-\gamma)q + \gamma h\left((1-\epsilon)p + \epsilon \dfrac{1 - p}{|\mathcal{Y}| - 1}\right) \notag \\
    &+ \gamma (1-h)\left( \epsilon \dfrac{1}{|\mathcal{Y}|-1} p + \dfrac{|\mathcal{Y}|-1 - \epsilon}{(|\mathcal{Y}|-1)^2}(1-p)\right).
\end{align}

In Eq.~(\ref{eq:beta'_q}), there are four terms, {\it i.e.}, $(1 - \gamma)\dfrac{1-q}{|\mathcal{Y}| - 1}$, $\gamma\dfrac{h}{|\mathcal{Y}| - 1}(1-p)$, $\gamma\dfrac{1-h}{|\mathcal{Y}| - 1}p$, and $\gamma(1-h)\dfrac{|\mathcal{Y}| - 2}{(|\mathcal{Y}| - 1)^2}(1-p)$. Using the assumption that the predictions are uniformly distributed among classes for nodes in $S'$; without loss of generality, let us calculate the probability regarding the second class label.

Similarly as in $\beta_q$, the term $(1 - \gamma)\dfrac{1-q}{|\mathcal{Y}| - 1}$ remains unaffected in the new setting as it is the result of self-propagation. The term $\gamma\dfrac{h}{|\mathcal{Y}| - 1}(1-p)$ calculates the influence from nodes that were in $S$. For these $|S| = hd$ nodes, they previously propagated $\dfrac{1-p}{|\mathcal{Y}| - 1}$. In the new setting, $\epsilon \dfrac{1}{|\mathcal{Y}| - 1} hd $ nodes now predict the second class and propagate $p$, while the rest of the $\dfrac{|\mathcal{Y}| - 1 - \epsilon}{|\mathcal{Y}| - 1}hd$ nodes still propagates $\dfrac{1-p}{|\mathcal{Y}| - 1}$. In total, this term is now modified into $\gamma h \left(\dfrac{\epsilon}{|\mathcal{Y}| - 1}p + \dfrac{|\mathcal{Y}| - 1 - \epsilon}{(|\mathcal{Y}| - 1)^2}(1-p)\right)$. The term $\gamma\dfrac{1-h}{|\mathcal{Y}| - 1}p$ calculates the influence from nodes that previously predicted the second class in $S'$. The nodes are now split into two groups with ratio $\epsilon : (1 - \epsilon)$, where the first group now propagates $\dfrac{1-p}{|\mathcal{Y}| - 1}$ and the latter still propagates $p$. In total, this term is now modified into $\gamma \dfrac{1-h}{|\mathcal{Y}| - 1}\left(\epsilon \dfrac{1}{|\mathcal{Y}| - 1}(1-p) + (1-\epsilon)p\right)$. Next, the term $\gamma(1-h)(1-p)\dfrac{|\mathcal{Y}| - 2}{(|\mathcal{Y}| - 1)^2}$ calculates the influence from nodes that previously did not predict as the second class in $S'$. Similarly as before, the nodes are now split into two groups with ratio $\dfrac{1}{|\mathcal{Y}| - 1}\epsilon : \dfrac{|\mathcal{Y}| - 1 - \epsilon}{|\mathcal{Y}| - 1}$, where the first group now (incorrectly) predicts the second class and thus propagates $p$, while the latter still propagates $\dfrac{1-p}{|\mathcal{Y}| - 1}$. In total, this term is now modified into $\gamma(1-h)\dfrac{|\mathcal{Y}| - 2}{|\mathcal{Y}| - 1}\left(\dfrac{\epsilon}{|\mathcal{Y}| - 1}p + \dfrac{|\mathcal{Y}| - 1 - \epsilon}{(|\mathcal{Y}| - 1)^2}(1-p)\right)$. In summary, in the new setting, $\beta'_q$ becomes 
\begin{align}
    \beta'_{q, \epsilon} &= (1 - \gamma)\dfrac{1-q}{|\mathcal{Y}| - 1} + \gamma h \left(\dfrac{\epsilon}{|\mathcal{Y}| - 1}p + \dfrac{|\mathcal{Y}| - 1 - \epsilon}{(|\mathcal{Y}| - 1)^2}(1-p)\right) \nonumber \\ 
    &+ \gamma \dfrac{1-h}{|\mathcal{Y}| - 1}\left(\epsilon \dfrac{1}{|\mathcal{Y}| - 1}(1-p) + (1-\epsilon)p\right) \nonumber\\ 
    &+ \gamma(1-h)\dfrac{|\mathcal{Y}| - 2}{|\mathcal{Y}| - 1}\left(\dfrac{\epsilon}{|\mathcal{Y}| - 1}p + \dfrac{|\mathcal{Y}| - 1 - \epsilon}{(|\mathcal{Y}| - 1)^2}(1-p)\right).
\end{align}

We can verify that both $\beta_{q, \epsilon = 0}$ and $\beta'_{q, \epsilon = 0}$ reduce to $\beta_{q}$ and $\beta'_q$, respectively. Now, in the scenario where the node prediction is corrected after propagation, we need $\beta_{q, \epsilon} > \beta'_{q, \epsilon}$. After calculation with similar approximations when we calculated Eq.~(\ref{equation:qwithgoodteacher}), we arrive at:
\begin{align}
    q &> \dfrac{1}{|\mathcal{Y}|} - \dfrac{\gamma}{1 - \gamma}\left(\left((1 - \epsilon) + \dfrac{1 - 2\epsilon}{|\mathcal{Y}|}\right)hp - (1-\epsilon)\dfrac{h + p}{|\mathcal{Y}|}\right) \notag\\ 
    &= \dfrac{1}{|\mathcal{Y}|} - \dfrac{\gamma}{1 - \gamma}\left(C - \epsilon\left(C + \dfrac{hp}{|\mathcal{Y}|}\right)\right),
\end{align}
which concludes the proof of Theorem~\ref{theorem:smoothingcorrectionwitherror}.


\section{Analysis for $\epsilon=0$}
Let us also consider a simpler scenario where the teacher GNN makes an incorrect prediction \textit{only} for a single node $v_*$ ({\it i.e.}, $\epsilon = 0$) with class 0 as its ground truth label by assigning a new output probability vector $P^t[*,:] = [q, \dfrac{1-q}{|\mathcal{Y} - 1|}, \cdots, \dfrac{1-q}{|\mathcal{Y} - 1|}]$ ($0 \leq q < 1/|\mathcal{Y}|$), same as in our previous setting in Theorem~\ref{theorem:smoothingcorrectionwitherror}. In this setting, we establish the following corollary.

\begin{corollary}
\label{corollary:appendix}
Suppose that the teacher make an incorrect prediction only for a single node $v_*$. Using one iteration of propagation in Eq.~(\ref{equation:labelpropagation}), the prediction of node $v_*$ gets corrected if
    \begin{equation}
        q \in \left[\max \left(0, \dfrac{1}{|\mathcal{Y}|} - \dfrac{\gamma}{1 - \gamma} C \right), \dfrac{1}{|\mathcal{Y}|}\right],
    \end{equation}
where $q$ is the output probability of $v_*$ corresponding to class 0 and $C$ is approximately $\left(1 + \dfrac{1}{|\mathcal{Y}|}\right)hp - \dfrac{h+p}{|\mathcal{Y}|}$.
\end{corollary}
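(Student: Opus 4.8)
The plan is to treat Corollary~\ref{corollary:appendix} as the $\epsilon=0$ specialization of Theorem~\ref{theorem:smoothingcorrectionwitherror}, which admits both an immediate and a self-contained proof. The immediate route simply observes that the error-dependent offset $b(\epsilon)=\bigl(C+\frac{hp}{|\mathcal{Y}|}\bigr)\epsilon$ appearing in Eq.~(\ref{equation:smoothingcorrectionwitherror}) vanishes at $\epsilon=0$, so the lower endpoint $\frac{1}{|\mathcal{Y}|}-\frac{\gamma}{1-\gamma}\bigl(C-b(\epsilon)\bigr)$ collapses to $\frac{1}{|\mathcal{Y}|}-\frac{\gamma}{1-\gamma}C$; intersecting with the constraint $q<1/|\mathcal{Y}|$ (which is what makes $v_*$ incorrect) and the trivial floor $q\ge 0$ then reproduces exactly the stated interval. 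Below I outline the self-contained argument, which is convenient because the preliminary matrices assembled in the appendix are already written for the all-correct-neighbor regime that is precisely the case $\epsilon=0$.

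First I would fix the setup: $v_*$ has ground-truth class $0$ and teacher probability $q<1/|\mathcal{Y}|$ on class $0$ (with mass $\frac{1-q}{|\mathcal{Y}|-1}$ on each remaining class), while \emph{every} other node is predicted correctly with the profile $\bigl(p,\frac{1-p}{|\mathcal{Y}|-1},\dots\bigr)$. Applying one step of Eq.~(\ref{equation:labelpropagation}) under the $d$-regularity and $h$-homophily assumptions, the propagated vector at $v_*$ has class-$0$ entry $\beta_q$ and, by the symmetry of $S'$ across the remaining $|\mathcal{Y}|-1$ classes, an identical entry $\beta'_q$ on every other class; these are Eqs.~(\ref{eq:beta_q}) and~(\ref{eq:beta'_q}) with the self-propagation contribution taken to be $(1-\gamma)q$ and $(1-\gamma)\frac{1-q}{|\mathcal{Y}|-1}$ rather than the $p$-based values. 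Since all incorrect classes tie at $\beta'_q$, the node $v_*$ is corrected if and only if $\beta_q>\beta'_q$.

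The remaining step is to expand $\beta_q-\beta'_q>0$ and solve for $q$. I would first isolate the self-terms into the factor $(1-\gamma)\bigl(q-\frac{1-q}{|\mathcal{Y}|-1}\bigr)=(1-\gamma)\frac{q|\mathcal{Y}|-1}{|\mathcal{Y}|-1}$, group the $\gamma$-weighted neighbor contributions separately, divide through by $(1-\gamma)$, and rearrange into a bound of the form $q>\frac{1}{|\mathcal{Y}|}-\frac{\gamma}{1-\gamma}C$. The main obstacle is purely bookkeeping rather than conceptual: the neighbor terms carry denominators $|\mathcal{Y}|-1$ and $(|\mathcal{Y}|-1)^2$, and matching the clean closed form $C\approx\bigl(1+\frac{1}{|\mathcal{Y}|}\bigr)hp-\frac{h+p}{|\mathcal{Y}|}$ requires the same large-$|\mathcal{Y}|$ approximation (discarding $O\bigl((|\mathcal{Y}|-1)^{-2}\bigr)$ contributions and treating $|\mathcal{Y}|-1\approx|\mathcal{Y}|$) used to reach the analogous inequality in the proof of Theorem~\ref{theorem:smoothingcorrectionwitherror}. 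Finally, I would pair this lower bound with the upper bound $q<1/|\mathcal{Y}|$ and the nonnegativity $q\ge 0$, writing the floor as $\max\bigl(0,\frac{1}{|\mathcal{Y}|}-\frac{\gamma}{1-\gamma}C\bigr)$ to obtain the interval in the corollary.
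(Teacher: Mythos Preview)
Your proposal is correct and your self-contained argument follows essentially the same route as the paper: compute the propagated class-$0$ entry $\beta_q$ and the common off-class entry $\beta'_q$ at $v_*$, impose $\beta_q>\beta'_q$, and solve for $q$ using the approximation $\tfrac{|\mathcal{Y}|-2}{|\mathcal{Y}|-1}\approx 1$ (equivalently your $|\mathcal{Y}|-1\approx|\mathcal{Y}|$) to obtain $q>\tfrac{1}{|\mathcal{Y}|}-\tfrac{\gamma}{1-\gamma}C$. Your additional observation that the corollary drops out of Theorem~\ref{theorem:smoothingcorrectionwitherror} by setting $\epsilon=0$ (so $b(\epsilon)=0$) is a valid shortcut the paper does not spell out, but the underlying computation is the same.
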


\begin{proof}
Since we consider the case where $\epsilon=0$ ({\it i.e.}, the teacher GNN provides correct predictions for nodes other than $v^{*}$), the resulting vector for the first row of $(\gamma \Tilde{A} + (1 - \gamma)I)P^t$ can be expressed as:
\begin{equation}
    ((\gamma \Tilde{A} + (1 - \gamma)I)P^t)[1,:] = [\beta_q, \underbrace{\beta_q', \cdots, \beta_q'}_{(|\mathcal{Y}| - 1)}].
\end{equation}

Intuitively, $\beta_q$ represents the result after propagation for the correct class, and $\beta'_q$ represents the rest of the (incorrect) classes. For the incorrect prediction to be corrected after propagation, it requires $\beta_q >\beta'_q$:
\begin{align}
    (1 - \gamma)\left(q - \dfrac{1-q}{|\mathcal{Y}|-1}\right) &> -\gamma hp - \gamma \dfrac{1-h}{|\mathcal{Y}|-1}(1-p) + \gamma\dfrac{h}{|\mathcal{Y}| - 1}(1-p) \nonumber \\
    &+ \gamma\dfrac{1-h}{|\mathcal{Y}| - 1}p + \gamma(1-h)\dfrac{|\mathcal{Y}| - 2}{(|\mathcal{Y}| - 1)^2}(1-p). \label{eq:intermediate}
\end{align}
Calculation of Eq.~(\ref{eq:intermediate}) can directly reveal the condition for the correction scenario. We can directly derive an interval for $q$ by approximating $\dfrac{|\mathcal{Y}|-2}{|\mathcal{Y}|-1} \approx 1$, which further reduces Eq.~(\ref{eq:intermediate}) to
\begin{equation}
\label{equation:qwithgoodteacher}
    q > \dfrac{1}{|\mathcal{Y}|} - \dfrac{\gamma}{1 - \gamma}\left(\left(1 + \dfrac{1}{|\mathcal{Y}|}\right)hp - \dfrac{h + p}{|\mathcal{Y}|}\right).
\end{equation}

Denoting $C = \left(1 + \dfrac{1}{|\mathcal{Y}|}\right)hp - \dfrac{h+p}{|\mathcal{Y}|}$ results in the interval Eq.~(\ref{eq:intermediate}), which concludes the proof of Corollary~\ref{corollary:appendix}.

\end{proof}

\section*{Acknowledgments}
This work was supported by the National Research Foundation of Korea (NRF) grant funded by the Korea government (MSIT) under Grants 2021R1A2C3004345 and RS-2023-00220762 and by the Institute of Information and Communications Technology Planning and Evaluation (IITP), Republic of Korea Grant by the Korean Government through MSIT (6G Post-MAC--POsitioning and Spectrum-Aware intelligenT MAC for Computing and Communication Convergence) under Grant 2021-0-00347.


\bibliographystyle{elsarticle-num-names} 
\bibliography{cas-refs}





\end{document}